\pgfplotsset{compat=1.15}
\newtheorem{theorem}{Theorem}
\newtheorem{proposition}{Proposition}
\newtheorem{observation}{Observation}
\newtheorem{definition}{Definition}
\newtheorem{lemma}{Lemma}
\newcommand{\opt}{\textsc{Opt}}
\newcommand{\alg}{\textsc{Alg}}
\newcommand{\pred}{\textsc{Trust}}
\newcommand{\I}{\mathcal{I}}
\newcommand{\trust}{\textsc{Trust}}
\newcommand{\greedy}{\textsc{Greedy}}
\newcommand{\SmoothMerge}{\textsc{SmoothMerge}}
\newcommand{\Iadversary}{\mathcal{I}_{ADV}}
\newcommand{\virtualalgorithm}{\textsc{Virtual Algorithm}}
\newcommand{\predGen}{*}
\newcommand{\predO}{\hat{\mathbf{O}}}
\newcommand{\predI}{\hat{\mathbf{I}}}
\newcommand{\firstPhase}{\textit{trust phase}}
\newcommand{\secondPhase}{\textit{independent phase}}
\definecolor{MyBlue}{cmyk}{1, 0.5, 0, 0}       % Dark Blue
\definecolor{MyGreen}{cmyk}{1, 0, 1, 0}        % Dark Green
\definecolor{MyRed}{cmyk}{0, 1, 1, 0.3}         % Deep Red (slightly muted to work in grayscale)
\newcommand{\drawInterval}[5][black]{%
    \draw[thick, #1] (#4,#3) -- (#5,#3); % Main interval line
    \draw[#1] (#4, #3 - 0.1) -- (#4, #3 + 0.1); % Left endpoint
    \draw[#1] (#5, #3 - 0.1) -- (#5, #3 + 0.1); % Right endpoint
}
\title{A Switching Framework for Online Interval Scheduling with Predictions}
\author[1]{Antonios Antoniadis\thanks{ \href{mailto:a.antoniadis@utwente.nl}{a.antoniadis@utwente.nl}}}
\author[2]{Ali Shahheidar\thanks{ \href{mailto:alishahheidar98@gmail.com}{alishahheidar98@gmail.com}}}
\author[3]{Golnoosh Shahkarami\thanks{ \href{mailto:gshahkar@mpi-inf.mpg.de}{gshahkar@mpi-inf.mpg.de}}}
\author[2]{Abolfazl Soltani\thanks{ \href{mailto:absoltani02@gmail.com}{absoltani02@gmail.com}}}
\affil[1]{University of Twente}
\affil[2]{Sharif University of Technology}
\affil[3]{Max Planck Institut für Informatik, Universität des Saarlandes}
\date{}
\begin{document}

\maketitle

\begin{abstract}
We study online interval scheduling in the irrevocable setting, where each interval must be immediately accepted or rejected upon arrival. The objective is to maximize the total length of accepted intervals while ensuring that no two accepted intervals overlap. We consider this problem in a learning-augmented setting, where the algorithm has access to (machine-learned) predictions. The goal is to design algorithms that leverage these predictions to improve performance while maintaining robust guarantees in the presence of prediction errors.

Our main contribution is the \textsc{SemiTrust-and-Switch} framework, which provides a unified approach for combining prediction-based and classical interval scheduling algorithms. This framework applies to both deterministic and randomized algorithms and captures the trade-off between consistency (performance under accurate predictions) and robustness (performance under adversarial inputs). Moreover, we provide lower bounds, proving the tightness of this framework in particular settings.

We further design a randomized algorithm that smoothly interpolates between prediction-based and robust algorithms. This algorithm achieves both robustness and smoothness--its performance degrades gracefully with the quality of the prediction.

\end{abstract}

\section{Introduction}
The online interval scheduling problem is a well-studied model in algorithm design that captures the challenge of selecting a subset of non-overlapping time intervals with the goal of optimizing a given objective. Intervals arrive sequentially over time, and the algorithm must irrevocably decide, upon each arrival, whether to accept or reject the interval without knowledge of future \textcolor{blue}{arrivals}. Each interval typically represents a task, job, or request, and one natural objective is to maximize the total length of the accepted intervals. This model arises in a wide range of practical applications, including computing resource management, manufacturing systems, and real-time service scheduling. Additional domains include vehicle routing and satellite communication scheduling~\citep{doi:10.1137/S0097539797321237}. Two surveys by~\cite{KolenLenstraPapadimitriouSpieksma2007, KOVALYOV2007331} provide comprehensive overviews of interval scheduling and its many application areas.

While classical online algorithms for interval scheduling, such as greedy approaches, offer strong worst-case guarantees, their performance can be overly pessimistic in practical settings. This is further highlighted by the fact that most lower-bound proofs rely on very carefully designed and fragile instances. In many real-world systems, partial information about future intervals is often available through historical trends or forecasting techniques, including machine-learned predictions. This motivates the study of \emph{learning-augmented algorithms} (also known as algorithms with predictions), which aim to bridge the gap between worst-case online guarantees and the performance of offline algorithms by leveraging predicted information during the decision process.

As in the traditional online setting, the standard formulation of interval scheduling requires decisions, that is acceptance or rejection of intervals, to be made irrevocably. 
However, it is known that no online algorithm can achieve a constant competitive ratio for this problem in general~\citep{Lipton1994OnlineIS, MSS-Long-1993}. To address this limitation, a relaxed model has been considered in which accepted intervals may be revoked prior to their deadlines, while rejections remain final. 
Several objective functions have been considered for the classic online interval scheduling problem, including maximizing the number of accepted intervals (unit-weight) and maximizing the total length of accepted intervals (proportional-weight). \cite{10.1007/978-3-031-38906-1_14} studies the unit-weight case, where the goal is to maximize the number of accepted intervals using predicted information. Moreover, \cite{karavasilis2025intervalselectionbinarypredictions} considers a proportional-weight objective, aiming to maximize the total length of accepted intervals in a revocable setting.

This leaves a natural gap: the irrevocable setting with proportional weights, which models many practical scenarios where job values depend on duration and decisions cannot be revoked. In this work, we fill this gap by providing a comprehensive analysis of learning-augmented interval scheduling under irrevocable decisions. We present both upper and lower bounds, characterizing the trade-offs between consistency and robustness in both deterministic and randomized settings. Moreover, we propose a smooth randomized algorithm and provide both theoretical guarantees and experimental results demonstrating its effectiveness on real-world data.

\subsection{Our Results and Techniques}

A common design pattern in learning-augmented algorithms is to combine two strategies: one that follows the predictions and one that ignores them entirely. This allows the algorithm to adapt its behavior based on the quality of the prediction. The challenge, of course, is deciding how and when to rely on each component.

A natural approach is to switch between a predictive and a classic algorithm based on the reliability of the prediction. This technique has been used effectively in several minimization problems~\citep{LykourisV21, KPZ18, rohatgi2020near, AntoniadisCEPS23, Wei20, NEURIPS2020_af94ed0d, DBLP:journals/corr/abs-2011-09076}, where the algorithm initially relies on the prediction and transitions to a backup strategy when the prediction appears unreliable, potentially continuing to switch as new evidence emerges. While switching in maximization problems poses additional challenges, it has been explored in online bipartite matching with imperfect advice~\citep{choo2024onlinebipartitematchingimperfect}. In this paper, we apply the switching technique to irrevocable online interval scheduling, where early commitments may permanently block future high-value intervals. Therefore, the timing of the switch, as well as the behavior in the prediction trusting phase, becomes particularly delicate.

While switching is a natural way to combine two strategies—one that trusts the prediction and one that ignores it—another powerful approach arises in the randomized setting: merging strategies probabilistically. We can define a randomized algorithm that selects between a predictive and a classic strategy based on a carefully chosen probability distribution. Leveraging this idea, we design a randomized algorithm that not only maintains robustness but also achieves \emph{smoothness}—its performance degrades gracefully as the prediction quality deteriorates.

We now summarize our main results and techniques, which build on these merging principles.

\paragraph{The \textsc{Trust-and-Switch} Framework.}
We introduce a general mechanism that allows algorithms to balance predictive and non-predictive behavior. The \textsc{Trust-and-Switch} framework takes as input a prediction model and a classic algorithm, and it decides—based on an online evaluation of the prediction—when to switch from trusting the prediction to relying on the classic algorithm. The switch occurs at most once and is irrevocable. We show that this simple structure captures the trade-off between consistency and robustness. More specifically, it achieves 1-consistency, and if the algorithm used after the switch is $\theta$-competitive, then the overall solution satisfies 
\[
|\opt| \le \theta \cdot |\alg| + 2k,
\]
where $k$ is the maximum interval length.

\paragraph{The \textsc{SemiTrust-and-Switch} Framework.}  
In addition to distinguishing between additive and multiplicative loss, our goal is to expose a tunable trade-off between consistency and robustness. This motivates a refined variant of the framework: instead of blindly following the prediction, the algorithm only follows the prediction for intervals whose length is below a threshold $\tau$, and accepts longer intervals greedily. This hybrid approach improves robustness without sacrificing too much consistency—in effect, it interpolates between fully trusting and fully ignoring the prediction. More specifically, it achieves $(1 + \frac{k}{\tau})$-consistency, and if the algorithm used after the switch is $\theta$-competitive, then the overall solution satisfies  
\[
|\opt| \le \max(\theta, \Delta + 1) \cdot |\alg| + \tau,
\]
where $\Delta$ is the ratio between the longest and shortest interval lengths.

\paragraph{Prediction Settings and Lower Bounds.}
Our frameworks work with a binary prediction model (denoted $\predO$), where predictions arrive online with the intervals and suggest whether to accept ($\hat{o}_i = 1$) or reject ($\hat{o}_i = 0$) interval $I_i$. We also establish lower bounds under a stronger, full-information setting (denoted $\predI$). Interestingly, our results show that having access to more detailed predictions, or receiving them offline, does not improve the achievable guarantees. In particular, for the \textsc{Trust-and-Switch} framework, we prove matching bounds for two-value instances: any $(1 + \alpha)$-consistent algorithm $\alg'$ with $\alpha < \frac{\lceil \Delta \rceil - 1}{\Delta}$ must incur a robustness loss of at least $\Delta \cdot |\alg'| + k$, even when given full predictions. This matches the guarantee of the framework for this class of instances, making the \textsc{Trust-and-Switch} framework optimal even when full predictions are available. Moreover, we provide a lower bound for our \textsc{SemiTrust-and-Switch} framework.

\paragraph{The \textsc{SmoothMerge} Algorithm.}
We introduce a randomized algorithm, \textsc{SmoothMerge}, that merges the behaviors of two strategies in a probabilistic manner. The algorithm simulates both strategies in parallel and independently. Upon the arrival of each interval, proceeds as follows: if the interval conflicts with previously accepted intervals, it is immediately rejected; otherwise, if both simulations agree on accepting (or rejecting), the algorithm follows that unanimous decision. When the simulations disagree, the algorithm accepts the interval with probabilities $p_t$ and $p_g$, corresponding to the two strategies.
This probabilistic merging yields both smoothness and robustness properties. Specifically, letting $\eta$ denote the prediction error, we have:
\begin{align*}
\mathbb{E}[|\alg|] \geq 
\max\Big\{ |\opt| \cdot (1 - \eta) \cdot p_t \cdot (1 - p_g), 
          \frac{p_g - p_t p_g}{1 - p_t p_g} \cdot |\greedy| \Big\}.
\end{align*}

\subsection{Related Work}

\paragraph{Online Interval Scheduling.}
The online interval scheduling problem was introduced by~\cite{Lipton1994OnlineIS}, who established the classical model in which intervals arrive sequentially and must be irrevocably accepted or rejected upon arrival. The goal is to maximize the total length of accepted, non-overlapping intervals. \cite{MSS-Long-1993} showed that there is no deterministic  algorithm that can achieve a constant competitive ratio, and \cite{Lipton1994OnlineIS} showed that no randomized algorithm can achieve a competitive ratio better than $O(\log \Delta)$, where $\Delta$ is the ratio between the longest and shortest intervals, and they provided an $O((\log \Delta)^{1+\epsilon})$-competitive algorithm. 

Several extensions of online interval scheduling have been studied. 
In the \emph{revocable} setting, where accepted intervals can be later removed, improved competitive guarantees are possible~\citep{10.1007/978-3-031-49815-2_13,GARAY1997180,TOMKINS1995173}.
Another extension of online interval scheduling is the \emph{any order} setting~\citep{10.1007/978-3-031-49815-2_13}, where the order of intervals can be arbitrary, and the \emph{sorted order} setting~\citep{Lipton1994OnlineIS}, where the intervals arrive in order of their release time.
In the \emph{weighted} setting, where intervals have arbitrary weights, no deterministic or randomized algorithm can achieve a finite competitive ratio in general~\citep{WOEGINGER19945, CanettiIrani1998}. However, for weight functions tied to interval length, \cite{WOEGINGER19945} identified classes (e.g., benevolent functions) that admit finite guarantees, and \cite{Seiden2000} gave improved randomized bounds. The complexity also varies with input structure: while disjoint path allocation is solvable in polynomial-time on trees and outerplanar graphs~\citep{DBLP:conf/icalp/GargVY93, 10.1007/3-540-57273-2_73}, it becomes NP-complete on general and series-parallel graphs~\citep{doi:10.1137/0205048, NISHIZEKI2001177}. 
For a comprehensive overview of the extensive literature on interval scheduling, we refer to the survey by \cite{KolenLenstraPapadimitriouSpieksma2007}, which provides thorough coverage of algorithmic techniques and complexity results across various problem variants.

\paragraph{Learning-augmented Algorithms.} Traditional worst-case analysis often fails to capture the practical performance of algorithms, motivating the study of beyond worst-case analysis~\citep{roughgarden2021beyond}. One prominent framework in this direction is that of learning-augmented algorithms (also known as algorithms with predictions), which incorporate machine-learned advice into decision-making~\citep{mitzenmacher2020algorithms}. \cite{LykourisV21} formalized this approach in the context of online caching, introducing the now-standard notions of consistency and robustness to measure how well an algorithm performs under both accurate and adversarial predictions.
This framework has since been applied to a wide range of online problems, including caching~\citep{LykourisV21, AntoniadisCEPS23, rohatgi2020near, Wei20}, secretary and matching problems~\citep{antoniadis2020secretary, dutting2021secretaries}, knapsack~\citep{im2021online}, Traveling Salesman Problem~\citep{Gouleakis_Lakis_Shahkarami_2023, bampis_et_al:LIPIcs.ESA.2023.12, chawla_et_al:LIPIcs.APPROX/RANDOM.2024.2}, 
online graph algorithms~\citep{azar2022online}, and various scheduling problems~\citep{KPZ18, NEURIPS2020_af94ed0d, mitzenmacher2020scheduling, lattanzi2020online, AJS22SWAT}. The scope of this area also extends beyond online problems. For an up-to-date list of papers in this field, we refer the reader to a curated online repository by~\cite{website}.

\paragraph{Online Interval Scheduling with Predictions.} 
Finally, the closest works to our own are~\cite{10.1007/978-3-031-38906-1_14} and~\cite{karavasilis2025intervalselectionbinarypredictions}.
\cite{10.1007/978-3-031-38906-1_14} study the unit-weight interval scheduling problem, where the objective is to maximize the number of accepted intervals, given predictions about the input sequence. They provide tight bounds on the competitive ratio achievable by deterministic algorithms. Their approach combines prediction-following with greedy acceptance when it is possible without a decrease in the profit of the planned solution. More recently,~\cite{karavasilis2025intervalselectionbinarypredictions} considered proportional-weight interval scheduling in a revocable setting, where the objective is to maximize the total length of accepted intervals and decisions can be revoked before deadlines. This work focuses on binary predictions and develops algorithms that can adaptively modify their decisions based on observed prediction quality. Our work addresses a crucial gap by studying proportional-weight interval scheduling with irrevocable decisions.

\section{Preliminaries}
\label{sec:prelim}
The input to the \emph{online interval scheduling problem} consists of a set $\I$ of $n$ intervals. Each interval $I_j \in \I$ is defined by a release time $r_j$ and a deadline $d_j$, with length $l_j = d_j - r_j$. Intervals arrive online: the information about interval $I_j$, namely its release time, deadline, and length, becomes available to the algorithm only at time $r_j$. Upon arrival, the algorithm must make an irrevocable decision to either accept or reject the interval, without knowledge of future intervals. The accepted intervals must be non-overlapping, and the objective is to maximize the total length of the accepted subset.

Throughout this work, we assume that intervals arrive in non-decreasing order of their release times. Without loss of generality, we label the intervals $I_1, I_2, \dots, I_n$ in order of arrival, so that $r_1 \leq r_2 \leq \dots \leq r_n$.

We define the following key parameters:
\begin{itemize}
    \item $k = \max_{I_j \in \I} l_j$: the maximum length of any interval.
    \item $\Delta = \frac{k}{\min_{I_j \in \I} l_j}$: the ratio between the maximum and minimum interval lengths.
\end{itemize}

Given an algorithm $\mathcal{A}$, we use $\mathcal{A}(\I)$ to denote the outcome of $\mathcal{A}$ on instance $\I$, and $|\mathcal{A}(\I)|$ to denote the total length of intervals selected by it.

We define notation for subsets of intervals based on their release times and deadlines. For timepoints $t_1, t_2 \in \mathbb{R}_{\ge 0}$, let $\I(t_1, t_2)$ denote the subset of intervals in $\I$ whose release times and deadlines satisfy constraints determined by $t_1$ and $t_2$, respectively. 
We use the symbol $\cdot$ to indicate that no constraint is applied to that component (release time or deadline). Superscripts $\leftarrow$ and $\rightarrow$ indicate the direction of the inequality: for a timepoint $t$, $t^\leftarrow$ denotes times at or before $t$ (i.e., $\leq t$), and $t^\rightarrow$ denotes times strictly after $t$ (i.e., $> t$).
For example:
\begin{itemize}
    \item $\I(t_1^\leftarrow, \cdot)$ is the set of intervals with release time $\leq t_1$ (no constraint on deadline),
    \item $\I(\cdot, t_2^\leftarrow)$ is the set of intervals with deadline $\leq t_2$ (no constraint on release time),
    \item $\I(t_1^\leftarrow, t_2^\rightarrow)$ is the set of intervals with release time $\leq t_1$ and deadline $> t_2$.
\end{itemize}

We denote the optimal offline algorithm by $\opt$. In particular, $|\opt(\I)|$ denotes the total length of the optimal offline solution, and $|\alg(\I)|$ denotes the total length achieved by the online algorithm under consideration.

The performance of an online algorithm is evaluated using the \textit{competitive ratio}, which measures the worst-case performance relative to the optimal offline solution. The competitive ratio is defined as:
\[
\text{competitive ratio} = \sup_{\I} \frac{|\opt(\I)|}{|\alg(\I)|},
\]
where the supremum is taken over all possible input instances $\I$.
An online algorithm is said to be \textit{$c$-competitive} if $\text{competitive ratio} \leq c$. In this case, the total length of intervals accepted by the algorithm is guaranteed to be at least $1/c$ of the total length of intervals accepted by the optimal offline solution for any input instance.

For randomized algorithms, the competitive ratio is defined based on the expected performance of the algorithm. Let $\mathbb{E}[\alg(\I)]$ denote the expected total length of intervals accepted by the randomized online algorithm. The competitive ratio for randomized algorithms is defined as:
\[
\text{competitive ratio (randomized)} = \sup_{\I} \frac{|\opt(\I)|}{\mathbb{E}[\alg(\I)]}.
\]
A randomized online algorithm is said to be \textit{$c$-competitive} if $\text{competitive ratio (randomized)} \leq c$, meaning that the expected total length of intervals accepted by the algorithm is at least $1/c$ of the total length accepted by the optimal offline solution, for any input instance $\I$.

\paragraph*{Prediction Setup.}
Learning-augmented algorithms represent a class of algorithms that leverage pre-computed predictions to enhance decision-making in online settings. These predictions, often derived from sources such as statistical models or machine learning systems, provide guidance to the algorithm without requiring it to learn from real-time data. In the context of the online interval scheduling problem, we consider the following prediction settings:
\begin{itemize}
    \item \textbf{Predictions $\predO$:} The predictions $\predO = \{\hat{o}_1, \hat{o}_2, \dots, \hat{o}_n\}$ are a binary vector, where $\hat{o}_i \in \{0, 1\}$ for each interval $I_i \in \I$. The value $\hat{o}_i = 1$ indicates that the interval $I_i$ should be accepted, while $\hat{o}_i = 0$ indicates that the interval should be rejected. This prediction represents a suggested optimal solution.
    
    \item \textbf{Predictions $\predI$:} The predictions $\predI$ aim to provide complete foresight into the characteristics of all intervals in the input. Specifically, $\predI = \{(\hat{r}_1, \hat{d}_1), (\hat{r}_2, \hat{d}_2), \dots, (\hat{r}_n, \hat{d}_n)\}$, where each tuple $(\hat{r}_i, \hat{d}_i)$ contains the predicted release time $\hat{r}_i$ and deadline $\hat{d}_i$ of interval $I_i$. The length of each interval can be computed as $\hat{l}_i = \hat{d}_i - \hat{r}_i$. This prediction setting aims to provide the algorithm with more detailed information to guide its decisions.
    
\end{itemize}

\paragraph*{Performance Metrics.}

We use $\predGen$ to denote a generic prediction model, which can represent any of the specific prediction settings considered in this work ($\predO$ or $\predI$). If $\I \vartriangleleft \predGen$ denotes all instances $\I$ for which the prediction $\predGen$ is accurate, an algorithm is 
\begin{itemize}
    \item $\alpha$-\textit{consistent} if it is $\alpha$-\textit{competitive} when the prediction is correct, i.e.,
    \begin{equation*}
    \max_{\I, \predGen : \I \vartriangleleft \predGen} \left\{ \frac{|\text{OPT}(\I)|}{|\text{ALG}(\I, \predGen)|} \right\} \leq \alpha,
    \end{equation*}
    where $\text{ALG}(\I, \predGen)$ denotes the solution returned by the algorithm using the prediction $\predGen$, and $\text{OPT}(\I)$ denotes the optimal offline solution.

    \item $\beta$-\textit{robust} if it is $\beta$-\textit{competitive} regardless of the quality of the prediction, i.e.,
    \begin{equation*}
    \max_{\I, \predGen} \left\{ \frac{|\text{OPT}(\I)|}{|\text{ALG}(\I, \predGen)|} \right\} \leq \beta,
    \end{equation*}
    where $\I$ denotes all possible input instances, and the algorithm's performance is bounded even when the prediction $\predGen$ is arbitrarily inaccurate.

    We denote by $\pred$ the algorithm that follows the prediction blindly. Given a prediction $\predO$, for each interval $I_j$, it accepts $I_j$ if and only if $\hat{o}_j = 1$.
    
\end{itemize}

\section{Warm-up: \textsc{Trust-and-Switch} Framework}
In this section, we introduce a general framework for addressing the online interval scheduling problem with predictions. Roughly speaking, the idea is to begin by following the predictions and fall back to a classic algorithm once it is certain that the predictions are not perfectly accurate. Using this framework, we design both deterministic and randomized algorithms that utilize predictions $\predO$ about the optimal set of intervals.

At time $r_j$, the algorithm has access to the set $\I(r_j^\leftarrow, \cdot)$, that is, all intervals released up to that point, alongside with the corresponding predictions for these intervals.

\paragraph{Framework \textsc{Trust-and-Switch}.} 
The general framework receives intervals and predictions in an online manner, along with a classic algorithm as input, and returns a robust consistent online algorithm. It consists of two phases: the \firstPhase\ and the \secondPhase. The algorithm begins in the \firstPhase, where it follows the predictions blindly and may transition to the \secondPhase\ at a designated time point if the predictions are found to be inaccurate. In the \firstPhase, decisions on whether to accept or reject an interval are made solely based on the prediction. In contrast, the \secondPhase\ ignores the predictions entirely and makes decisions using the given classic algorithm, without relying on any prediction.

Upon the arrival of interval $I_j$ at time $r_j$, the framework evaluates the predictions, and the moment it is certain that the predictions are inaccurate, it decides to switch to the classic algorithm. Although the framework re-evaluates the accuracy of the predictions at each release time $r_j$, it also takes into account the whole of $\mathcal{I}(r_j^\leftarrow,\cdot)$.

More formally, upon the arrival of interval $I_j$ at time $r_j$, the framework computes the \emph{evaluation point}:
\[
t_j := \max\left( \{r_j\} \cup \{d_i \mid I_i \in \I(r_j^\leftarrow, \cdot),\ \hat{o}_i = 1\} \right).
\]
This value ensures that if there is an accepted interval predicted to be in $\predO$ that is still active at time $r_j$, we evaluate the prediction up to the last deadline among predicted intervals in $\I(r_j^\leftarrow, \cdot)$. Otherwise, we simply consider $r_j$ as the evaluation point.

Note that if $\hat{o}_j = 1$, then the corresponding evaluation point is $t_j = d_j$; however, if we are already certain at $r_j$ about the inaccuracy of the predictions, the switch to the \secondPhase\ can take place earlier.

To evaluate the accuracy of the prediction up to the time $t_j$, the algorithm first verifies the consistency of $\predO$: in other words, whether all intervals in $\I(r_j^\leftarrow, \cdot)$ with $\hat{o}_i = 1$, including $I_j$, are non-overlapping. If not, the prediction $\predO$ is clearly invalid, and the algorithm switches as soon as possible to the \secondPhase.

Otherwise, it compares the quality of the prediction with the offline optimum up to time $t_j$. Specifically, it verifies whether
\[
|\opt(\I(r_j^\leftarrow, t_j^\leftarrow))| > |\pred(\I(r_j^\leftarrow, t_j^\leftarrow))|.
\]

If this inequality holds, the algorithm transitions to the \secondPhase\ and starts to run the classic algorithm as soon as possible -- that is: at $r_j$ if $I_j \in \predO$, and at $t_j$ otherwise -- on the instance $\mathcal{I}(r_j^\rightarrow,\cdot)$ (resp. $\mathcal{I}(t_j^\rightarrow,\cdot)$) consisting of the intervals released after this point. Otherwise, it continues in the \firstPhase\ and accepts $I_j$ if and only if $\hat{o}_j = 1$.

\begin{figure}[t]
    \centering
    \begin{tikzpicture}[scale=1]
        \foreach \y/\xstart/\xend in {
            1/4/10          
        } {
            \drawInterval[MyBlue, thick]{solid}{\y}{\xstart}{\xend};
        }
        \foreach \y/\xstart/\xend in {
            2.2/1/1.8,
            1.8/0.3/1.2,
            1.8/1.4/2.3,
            1.8/2.5/3.2,
            1.4/0.5/5
        } {
            \drawInterval[MyRed, thick]{densely dotted}{\y}{\xstart}{\xend};
        }
        
        \foreach \y/\xstart/\xend in {
            2.2/2/6,
            1.8/4.5/11,
            1.4/6/7
        } {
            \drawInterval[MyBlue, thick]{solid}{\y}{\xstart}{\xend};
        }

        \foreach \y/\xstart/\xend in {
            2.2/6.5/7,
            2.2/7.5/10,
            1.8/11.2/12,
            1.4/7.5/8.5,
            1.4/9/10.2,
            1.4/10.4/12
        } {
            \drawInterval[MyGreen, thick]{densely dashed}{\y}{\xstart}{\xend};
        }
        
        \draw [gray, thick, dotted] (6, 0) -- (6, 3.5);
        % \draw [gray, thick, dotted] (10, 0) -- (10, 3);

        \node [gray] at (4, 0.7) {$r_j$};
        \node [gray] at (10, 0.7) {$d_j$};
        \node [gray] at (2, 2.5) {$r_i$};
        \node [gray] at (6, 2.5 ) {$d_i$};
        \node [black] at (4, 2.7) {$I_i \in \predO$};
        \node [gray] at (6, 0) {$t_j = d_i$};

        \node [MyRed] at (2, 0.5) {$\I_1$};
        \node [MyBlue] at (7,0.5) {$\I_2$};
        \node [MyGreen] at (11, 0.5) {$\I_3$};
                
    \end{tikzpicture}
    \caption{Classification of intervals when, at timepoint $r_j$, the framework decides to switch based on the evaluation up to $t_j = d_i$, where $I_i \in \predO$. The three classes $\mathcal{I}_1$, $\mathcal{I}_2$, and $\mathcal{I}_3$ are illustrated with distinct visual styles in the figure.}
    \label{fig:prediction}
\end{figure}
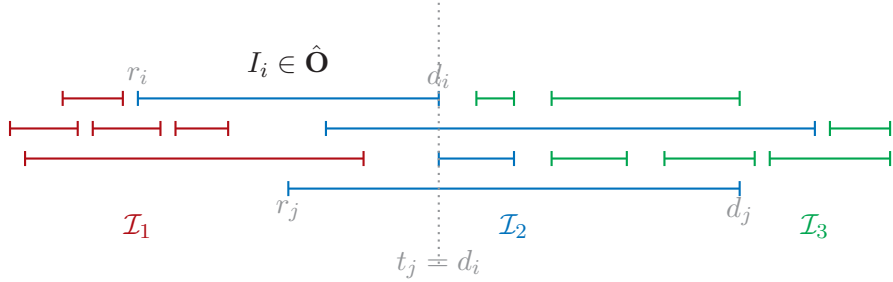

We first show that whenever the framework switches to the \secondPhase, the predicted solution is indeed suboptimal. This implies that whenever the predictions are accurate, the framework remains in the \firstPhase; this guarantees the $1$-consistency of the framework.

\begin{lemma}\label{lem:Framewok-1-cons}
The \textsc{Trust-and-Switch} framework satisfies $1$-consistency.
\end{lemma}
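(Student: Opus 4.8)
The plan is to show the contrapositive-style statement that the framework makes the same decisions as \pred\ whenever the prediction is accurate, hence stays in the \firstPhase\ forever and achieves the optimum. The key is to argue that the switching condition is never triggered when $\predO$ is a correct prediction. So I would fix an instance $\I$ and a prediction $\predO$ with $\I \vartriangleleft \predGen$, meaning that the set $\{I_i : \hat{o}_i = 1\}$ is exactly an optimal (non-overlapping) solution for $\I$. I then need to check that at every arrival time $r_j$, neither of the two triggers fires: (i) the consistency check --- the predicted-accepted intervals in $\I(r_j^\leftarrow,\cdot)$ are pairwise non-overlapping; and (ii) the optimality check --- $|\opt(\I(r_j^\leftarrow, t_j^\leftarrow))| > |\pred(\I(r_j^\leftarrow, t_j^\leftarrow))|$ does \emph{not} hold.

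Trigger (i) is immediate: since $\predO$ coincides with a feasible optimal solution on all of $\I$, its restriction to any subset $\I(r_j^\leftarrow,\cdot)$ is certainly non-overlapping, so the consistency check always passes. The substance is in ruling out trigger (ii). Here the main obstacle is the subtlety of the window $\I(r_j^\leftarrow, t_j^\leftarrow)$: it contains exactly the intervals released by time $r_j$ whose deadline is at most $t_j$, and $t_j$ is by construction the largest deadline among predicted-accepted intervals released by $r_j$ (or $r_j$ itself if there are none). I would argue that on this window, $\pred$ already achieves the offline optimum. The intuition: the intervals in $\predO$ restricted to $\I(r_j^\leftarrow, t_j^\leftarrow)$ are precisely the globally optimal intervals that both start by $r_j$ and finish by $t_j$; because $t_j$ is chosen as the last such deadline, no predicted-accepted interval released by $r_j$ is "cut off" by the deadline constraint $t_j^\leftarrow$. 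The crux of the argument is then: \emph{if $\pred$ restricted to this window were not optimal for the window, one could swap in the better sub-solution for the window and leave the rest of the global $\predO$ solution untouched, contradicting global optimality of $\predO$.} This swap is legitimate precisely because every interval of the window ends by $t_j$ while the remaining predicted-accepted intervals (those released after $r_j$, or released by $r_j$ but with deadline $> t_j$) either start after $r_j \ge$ the release times in the window or, in the boundary case, need a short separate argument --- this boundary case (a predicted-accepted interval released by $r_j$ with deadline exceeding $t_j$, which by definition of $t_j$ cannot exist) is what makes the choice of $t_j$ exactly right, and I expect verifying that no such interval can cause an overlap in the swap to be the fiddly step.

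Concretely, the steps in order: (1) State the feasibility/optimality meaning of $\I \vartriangleleft \predO$ and note $\pred$ accepts exactly $\{I_i : \hat o_i = 1\}$. (2) Fix $r_j$; show the consistency check passes (restriction of a feasible set is feasible). (3) Show $\pred(\I(r_j^\leftarrow, t_j^\leftarrow))$ equals $\{I_i : \hat o_i = 1\} \cap \I(r_j^\leftarrow, t_j^\leftarrow)$, and argue via the exchange argument above that this set is an optimal solution for the sub-instance $\I(r_j^\leftarrow, t_j^\leftarrow)$ --- here invoking the definition of $t_j$ to ensure no predicted interval straddles the boundary. (4) Conclude the optimality check fails at every $r_j$, so the framework never switches, so it outputs exactly $\pred(\I) = \opt(\I)$, giving $|\opt(\I)| = |\alg(\I)|$ and hence $1$-consistency. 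The main obstacle, as noted, is step (3): making the exchange argument airtight, in particular handling how the deadline-truncated window interacts with intervals released after $r_j$.
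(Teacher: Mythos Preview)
Your proposal is correct and follows essentially the same route as the paper: both argue that if the switching condition $|\opt(\I(r_j^\leftarrow, t_j^\leftarrow))| > |\pred(\I(r_j^\leftarrow, t_j^\leftarrow))|$ ever held under an accurate prediction, one could swap the predicted intervals inside the window for the window-optimal ones and keep the remaining predicted intervals unchanged, yielding a globally feasible solution strictly better than $\predO$ and contradicting its optimality. The paper formalizes this via the partition into $\I_{\text{before}}$ and $\I_{\text{after}}$, and your step~(3) is exactly that exchange argument; your observation that the definition of $t_j$ rules out any predicted-accepted interval released by $r_j$ with deadline beyond $t_j$ is precisely what makes the swap feasible in both versions.
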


\begin{proof}
    We want to show that if the predictions are accurate, the framework never switches which directly implies its $1$-consistency.
    
    Assume for contradiction that the prediction $\predO$ is accurate, but the framework switches at time $t_j$. By construction, this means that
    \[
    |\opt(\I(r_j^\leftarrow, t_j^\leftarrow))| >  |\pred(\I(r_j^\leftarrow, t_j^\leftarrow))|.
    \]
    
    Since $t_j$ is either a release time or deadline of some interval in $\predO$, or the release time of an interval not in $\predO$ that does not overlap with any predicted interval, we can partition the predicted intervals into two disjoint subsets:
    \begin{itemize}
        \item $\I_{\text{before}} := \{ I_i \in \I(r_j^\leftarrow, t_j^\leftarrow) \mid \hat{o}_i = 1 \}$, i.e., predicted intervals released by $r_j$, that complete at or before $t_j$,
        \item $\I_{\text{after}} := \{ I_i \in \I(t_j^\rightarrow, \cdot) \mid \hat{o}_i = 1 \}$, i.e., predicted intervals released after $t_j$.
    \end{itemize}
    
    Now consider constructing a new feasible solution by replacing $\I_{\text{before}}$ with the intervals in $\opt(\I(r_j^\leftarrow, t_j^\leftarrow))$ and keeping $\I_{\text{after}}$ unchanged (note that $\I_\text{after}$ contains no two intersecting intervals, by the assumption on the optimality of the prediction, and furthermore does not contain any interval intersecting with $\I(r_j^\leftarrow, t_j^\leftarrow)$).  
    The total length of this new solution is
    \[
    |\opt(\I(r_j^\leftarrow, t_j))| + \sum_{I_i \in \I_{\text{after}}} l_i > \sum_{I_i \in \I_{\text{before}}} l_i + \sum_{I_i \in \I_{\text{after}}} l_i = |\pred(\I)|,
    \]
    contradicting the optimality of the prediction.
    
    Therefore, the framework does not switch when the predictions are accurate, and thus guarantees $1$-consistency.
\end{proof}

We now analyze the robustness of the \textsc{Trust-and-Switch} framework. Specifically, we show that even in the presence of inaccurate predictions, the total value of the optimal solution is at most $\theta \cdot |\alg| + 2k$, where the additive loss accounts for following inaccurate predictions in the \firstPhase\ and the transition to the \secondPhase, and the multiplicative term corresponds to the use of a $\theta$-competitive algorithm in the \secondPhase.

\begin{theorem}\label{Thm:Framewok-robustness}
Consider the \textsc{Trust-and-Switch} framework $\alg$, and assume that the \secondPhase\ uses a $\theta$-competitive algorithm. Then the algorithm satisfies
\[
|\opt| \le \theta \cdot |\alg| + 2k.
\]
\end{theorem}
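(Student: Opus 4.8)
The plan is to split on whether $\alg$ ever leaves the \firstPhase, and in the switching case to decompose the instance around the switch point into a ``trust region'', a ``straddling'' interval, and the region handled by the $\theta$-competitive classic algorithm.

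\emph{Suppose $\alg$ never switches.} Then it accepts exactly the predicted intervals, so $|\alg| = |\pred(\I)|$, and this set is feasible (otherwise an inconsistency would have forced a switch). Apply the non-triggering test at the arrival of the last interval $I_n$: its evaluation point $t_n$ satisfies $|\opt(\I(r_n^\leftarrow, t_n^\leftarrow))| \le |\pred(\I(r_n^\leftarrow, t_n^\leftarrow))|$. Since every interval is released by $r_n$, the window $\I(r_n^\leftarrow, t_n^\leftarrow)$ is exactly the set of intervals with deadline at most $t_n$; moreover $t_n$ is at least the largest predicted deadline, so $\pred$'s entire solution lies in this window and $|\pred(\I(r_n^\leftarrow, t_n^\leftarrow))| = |\alg|$. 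Every remaining interval has deadline $> t_n \ge r_n \ge$ its release time, hence contains the point $t_n$, so $\opt$ keeps at most one of them, of length at most $k$. Thus $|\opt| \le |\alg| + k \le \theta\cdot|\alg| + 2k$, using $\theta \ge 1$.

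\emph{Suppose $\alg$ switches}, let $s$ be the time the \secondPhase\ begins, and write $\alg = T \cup C$ with $T$ accepted in the \firstPhase\ and $C$ in the \secondPhase. I will use that the ``as soon as possible'' rule makes $s$ large enough that no interval of $T$ is active at time $s$, so the classic algorithm runs on the clean instance $\I(s^\rightarrow, \cdot)$ and $|C| \ge \frac1\theta |\opt(\I(s^\rightarrow, \cdot))|$. Partition $\I$ into $\I_1 = \{I : d_I \le s\}$, $\I_2 = \{I : r_I \le s < d_I\}$, and $\I_3 = \I(s^\rightarrow, \cdot)$. The intervals of $\I_2$ pairwise overlap at $s$, so $\opt$ keeps at most one of them and $|\opt \cap \I_2| \le k$; and $|\opt \cap \I_3| \le |\opt(\I_3)| \le \theta\cdot|C|$. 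The heart of the argument is then the claim
\[
|\opt(\I_1)| \;\le\; |T| + k,
\]
for granting it we get $|\opt| \le (|T|+k) + k + \theta\cdot|C| \le \theta(|T|+|C|) + 2k = \theta\cdot|\alg| + 2k$. To prove the claim: first, $T = \pred(\I_1)$, since every predicted interval completing by $s$ arrived before the switch with $\predO$ still consistent (hence was accepted), and by the choice of $s$ every interval of $T$ completes by $s$. Second, if $r_j$ is the release time that triggered the switch with evaluation point $t_j$, and $r_{j-1}$ the preceding release time, the test at $r_{j-1}$ did not fire, so $|\opt(\I(r_{j-1}^\leftarrow, t_{j-1}^\leftarrow))| \le |\pred(\I(r_{j-1}^\leftarrow, t_{j-1}^\leftarrow))| \le |T|$, the last step because the predicted intervals released by $r_{j-1}$ all complete by $t_{j-1}$ and hence lie in $\I_1$. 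Finally, every interval in $\I_1 \setminus \I(r_{j-1}^\leftarrow, t_{j-1}^\leftarrow)$ has release at most $r_{j-1} \le t_{j-1}$ and deadline in $(t_{j-1}, s]$, so all such intervals contain $t_{j-1}$; splitting $\opt(\I_1)$ along the boundary $t_{j-1}$ then gives $|\opt(\I_1)| \le |\opt(\I(r_{j-1}^\leftarrow, t_{j-1}^\leftarrow))| + k \le |T| + k$.

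The main obstacle is making the last step uniform. The switch can be triggered by an \emph{inconsistency} of $\predO$ rather than by the suboptimality test, and the switch time is $r_j$ when $\hat{o}_j = 1$ but $t_j$ when $\hat{o}_j = 0$; these variants change precisely which intervals $T$ contains and how $\I_1$ sits inside the last non-triggering window, so each needs its own (analogous) accounting, and one must separately dispose of the degenerate case where the switch is triggered by one of the very first intervals (there the claim is immediate since $|\opt(\I_1)|$ is itself $O(k)$). A smaller point, used silently above, is verifying that the framework's switching rule really does hand a conflict-free instance $\I(s^\rightarrow, \cdot)$ to the classic algorithm; once that and the case analysis are pinned down, the three-region decomposition is the entire proof.
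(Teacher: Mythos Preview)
Your proposal is correct and follows essentially the same approach as the paper: the three-region decomposition around the switch point, bounding $|\opt(\I_1)|$ via the non-firing of the test at the previous arrival, bounding $|\opt(\I_2)| \le k$ because all straddling intervals share a common point, and invoking $\theta$-competitiveness on $\I_3$. The paper partitions around the evaluation point $t_j$ rather than the actual switch time $s$ and handles the no-switch case implicitly by setting $j = n$, but these are cosmetic differences; the case analysis you flag as the ``main obstacle'' (switch at $r_j$ versus $t_j$, switch by inconsistency versus by suboptimality) is exactly the paper's Cases~1--3, and your acknowledgment that the conflict-freeness of the handoff to the classic algorithm needs verification is well taken.
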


\begin{proof}
Assume that the framework decides to switch to the \secondPhase\ at time $r_j$, based on the evaluation of the prediction up to $t_j$, and we have to wait until then to be able to switch. If the framework never switches, we set $j = n$.

To analyze the performance of the algorithm, we partition the intervals into three disjoint classes based on their relation to the switching point $t_j$ (see Figure~\ref{fig:prediction}):
\begin{itemize}
    \item $\I_1 := \I(\cdot, t_j^\leftarrow)$, intervals with both release time and deadline strictly before $t_j$,
    \item $\I_2 := \I(t_j^\leftarrow, t_j^\rightarrow)$, intervals with release time at or before $t_j$ and deadline strictly after $t_j$,
    \item $\I_3 := \I(t_j^\rightarrow, \cdot)$, intervals with release time strictly after $t_j$.
\end{itemize}

We now analyze the contribution of each class to the overall value of the optimal solution and the algorithm, beginning with Class $\I_1$.

Note that $j > 1$, since the algorithm does not decide to switch at $t_1$, regardless of whether $\hat{o}_1 = 1$ or $\hat{o}_1 = 0$. We distinguish two cases based on how $t_j$ is determined:

\begin{itemize}
    \item \textbf{Case 1:} $t_j = d_i$ for some $i = \arg\max\{d_i \mid I_i \in \I(r_j^\leftarrow, \cdot),\ \hat{o}_i = 1\}$ and $i \neq j$.

        In this case, $t_i = t_j = d_i$. Since the framework did not switch at $r_i$, it must be that:
        \begin{align*}
            |\opt(\I(r_i^\leftarrow, d_i^\leftarrow))| &\leq |\alg(\I(r_i^\leftarrow, d_i^\leftarrow))| \\ 
            &= |\pred(\I(r_i^\leftarrow, d_i^\leftarrow))|.
            \end{align*}

        Since $\alg$ accepted interval $I_i$, and we decide to switch at $r_j$ with $r_i < r_j < d_i$, we have:
        \[
        |\alg(\I(r_j^\leftarrow, d_i^\leftarrow))| = |\alg(\I(r_i^\leftarrow, d_i^\leftarrow))|.
        \]

        Meanwhile, the optimal offline solution may include additional intervals $w$ with $r_i < r_w < d_w < d_i = t_j$, whose total length is at most $l_i$. Thus:
        \[
        |\opt(\I(r_j^\leftarrow, d_i^\leftarrow))| < |\opt(\I(r_i^\leftarrow, d_i^\leftarrow))| + l_i.
        \]

        Therefore,
        \[
        |\opt(\I(r_j^\leftarrow, t_j^\leftarrow))| < |\alg(\I(r_j^\leftarrow, t_j^\leftarrow))| + k.
        \]
   
        \item \textbf{Case 2:} $t_j = d_j$ and $\hat{o}_j = 1$.

        In this case, we have $t_j = d_j$. Consider the prefix $\I(r_{j-1}^\leftarrow, \cdot)$. If $I_j$ overlaps with a previously predicted interval, we can bound the loss using the same argument as in Case~1. Therefore, we assume that the predicted intervals do not conflict with each other. In this case, we must have $t_{i-1} < r_i$. Since the framework did not switch at $t_{i-1}$, we have:
        \begin{align*}
        |\opt(\I(r_{i-1}^\leftarrow, t_{i-1}^\leftarrow))| &\leq |\alg(\I(r_{i-1}^\leftarrow, t_{i-1}^\leftarrow))| \\
        &= |\pred(\I(r_{i-1}^\leftarrow, t_{i-1}^\leftarrow))|.
        \end{align*}

        Since $\pred$ accepts interval $I_i$, we have:
        \[
        |\alg(\I(r_i^\leftarrow, t_i^\leftarrow))| = |\alg(\I(r_{i-1}^\leftarrow, t_{i-1}^\leftarrow))| + l_i.
        \]

        On the other hand, the optimal offline solution may include one additional interval $w$ with $r_w \leq r_{i-1}$ and $t_{i-1} < d_w < t_i$. Its length cannot exceed $k$, so:
        \[
        |\opt(\I(r_i^\leftarrow, t_i^\leftarrow))| < |\opt(\I(r_{i-1}^\leftarrow, t_{i-1}^\leftarrow))| + k.
        \]

        Therefore,
        \[
        |\opt(\I(r_i^\leftarrow, t_i^\leftarrow))| < |\alg(\I(r_i^\leftarrow, t_i^\leftarrow))| + (k - l_i).
        \]
        
        \item \textbf{Case 3:} $t_j = r_j$.

In this case, $t_j$ is determined by the release time of $I_j$, which implies that no predicted interval with $\hat{o}_i = 1$ in $\I(r_j^\leftarrow, \cdot)$ overlaps $I_j$. Hence, $t_{j-1} < t_j = r_j$. Because the framework did not switch at $t_{j-1}$, we must have
\begin{align*}
    |\opt(\I(r_{j-1}^\leftarrow, t_{j-1}^\leftarrow))| &\leq |\alg(\I(r_{j-1}^\leftarrow, t_{j-1}^\leftarrow))| \\
    &= |\pred(\I(r_{j-1}^\leftarrow, t_{j-1}^\leftarrow))|.
\end{align*}

At time $t_j$, the framework does switch, so
\[
|\opt(\I(r_j^\leftarrow, t_j^\leftarrow))| > |\pred(\I(r_j^\leftarrow, t_j^\leftarrow))|.
\]

The only difference between $\I(r_j^\leftarrow, t_j^\leftarrow)$ and $\I(r_{j-1}^\leftarrow, t_{j-1}^\leftarrow)$ is the set of intervals $w$ whose release time is at or before $t_{j-1}$ and whose deadline lies in the open interval $(t_{j-1}, r_j)$. More formally,

\[
\I(r_j^\leftarrow, t_j^\leftarrow) \setminus \I(r_{j-1}^\leftarrow, t_{j-1}^\leftarrow) = \I(t_{j-1}^\leftarrow,\ (t_{j-1}, r_j)).
\]

Because all such intervals contain the point $t_{j-1}$, the non-overlapping constraint implies that the optimal solution can select at most one of them. Consequently,
\[
|\opt(\I(r_j^\leftarrow, t_j^\leftarrow))| \le |\opt(\I(r_{j-1}^\leftarrow, t_{j-1}^\leftarrow))| + k.
\]

Therefore, since $\alg$ does not accept any new interval in the time window $(t_{j-1}, t_j)$, we have $|\alg(\I(r_j^\leftarrow, t_j^\leftarrow))| = |\alg(\I(r_{j-1}^\leftarrow, t_{j-1}^\leftarrow))|$, and thus
\[
|\opt(\I(r_j^\leftarrow, t_j^\leftarrow))| \le |\alg(\I(r_j^\leftarrow, t_j^\leftarrow))| + k.
\]
\end{itemize}

So far, we have bounded the cost of the algorithm for the first class of intervals with deadlines before $t_j$. In both cases, we have shown that the value of the optimal solution for intervals in $\I_1$ is at most the value obtained by the algorithm on $\I_1$ plus an additive term of $k$:
\[
|\opt(\I_1)| \le |\alg(\I_1)| + k.
\]

We now turn to the second class of intervals, $\I_2$. If the algorithm switches to the \secondPhase\ at time $t_j$, where $t_j = d_i$ for some $j \neq i = \arg\max\{d_i \mid I_i \in \I(r_j^\leftarrow, \cdot),\ \hat{o}_i = 1\}$, then there may be a set of intervals whose release time is before $t_j$ and whose deadline is after $t_j$. These intervals are not considered in either phase of the algorithm. This set might be non-empty only in Case~1.

Since all of these intervals contain the timepoint $t_j$, the non-overlapping constraint implies that the optimal solution can include at most one of them. Moreover, since the algorithm does not select any interval in this class, we have:
\[
|\opt(\I_2)| \le |\alg(\I_2)| + k = k.
\]

Finally, we consider the third class $\I_3$, consisting of intervals with release time at or after $t_j$. By construction, the algorithm switches to the \secondPhase\ at time $t_j$ and runs a $\theta$-competitive algorithm on this suffix of the instance. Therefore, we have:
\[
|\opt(\I_3)| \le \theta \cdot |\alg(\I_3)|.
\]

Since intervals in different classes might overlap with each other, we can bound the optimal value as:
\[
|\opt| \leq |\opt(\I_1)| + |\opt(\I_2)| + |\opt(\I_3)|.
\]

Putting all parts together,
\begin{align*}
|\opt| &\leq |\opt(\I_1)| + |\opt(\I_2)| + |\opt(\I_3)| \\
       &\le (|\alg(\I_1)| + k) + k + \theta \cdot |\alg(\I_3)|,
\end{align*}
and since $\theta > 1$, we conclude
\[
|\opt| \leq \theta \cdot |\alg| + 2k.
\]

This completes the proof.
\end{proof}

We note that the framework allows any deterministic or randomized algorithm to be used in the \secondPhase. In practice, it makes sense to apply the best-known classical algorithm for the given setting, whether deterministic or randomized.

\subsubsection{Two-Value Instances}
We give a tighter bound for two-value instances. All intervals in these instances have one of two distinct lengths. We refer to these as \emph{short} and \emph{long} intervals. More specifically, we show that the \textsc{Trust-and-Switch} framework provides a better robustness guarantee in this special case, due to the restricted structure of the interval lengths.

\begin{lemma}\label{lem:TrustFramewok-2value-robustness}
Consider the \textsc{Trust-and-Switch} framework $\alg$, and assume that the \secondPhase\ uses a $\theta$-competitive algorithm. Then, on two-value instances, the algorithm satisfies
\[
|\opt| \le \max\{\theta, 2\} \cdot |\alg| + k.
\]
\end{lemma}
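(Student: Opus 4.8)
The plan is to rerun the argument of Theorem~\ref{Thm:Framewok-robustness} with the same three-class partition around the switching point $t_j$ (with $j=n$ if no switch ever occurs): $\I_1=\I(\cdot,t_j^\leftarrow)$, $\I_2=\I(t_j^\leftarrow,t_j^\rightarrow)$, $\I_3=\I(t_j^\rightarrow,\cdot)$, together with the structural inequalities $|\opt|\le|\opt(\I_1)|+|\opt(\I_2)|+|\opt(\I_3)|$ and $|\alg|\ge|\alg(\I_1)|+|\alg(\I_3)|$. The two bounds $|\opt(\I_2)|\le k$ (all intervals of $\I_2$ contain $t_j$, so at most one is used by any feasible solution) and $|\opt(\I_3)|\le\theta\,|\alg(\I_3)|$ ($\theta$-competitiveness on the suffix) carry over unchanged, so the whole improvement must come from the first class: I will show that on two-value instances $|\opt(\I_1)|\le\max\{\theta,2\}\cdot|\alg(\I_1)|$, after which $|\opt|\le\max\{\theta,2\}(|\alg(\I_1)|+|\alg(\I_3)|)+k\le\max\{\theta,2\}\cdot|\alg|+k$.

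To bound $\I_1$, I go through the same three cases for $t_j$ as in the proof of Theorem~\ref{Thm:Framewok-robustness}; in each of them that proof shows the excess $|\opt(\I_1)|-|\alg(\I_1)|$ is at most the length of one interval, and it remains to absorb this excess as a multiplicative factor on $|\alg(\I_1)|$. If $t_j=d_i$ for an interval $I_i$ that $\alg$ accepted in the trust phase, the excess is at most $l_i\le|\alg(\I_1)|$, giving $|\opt(\I_1)|\le 2\,|\alg(\I_1)|$ with no use of the instance's structure. In the two remaining cases the excess is a single interval of length at most $k$; here the two-value structure is what helps: as long as $\alg$ accepted some interval in the trust phase, that interval has length at least $k/\Delta$, so $|\alg(\I_1)|\ge k/\Delta$ and the excess is at most $\Delta\,|\alg(\I_1)|\le\theta\,|\alg(\I_1)|$, using that any competitive algorithm on two-value instances must be at least $\Delta$-competitive (a shortest interval followed by an overlapping longest one forces ratio $\Delta$, so $\theta\ge\Delta$). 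Tracking the $\alg$-accepted interval that crosses the reference point, exactly as in Cases~2 and~3 of the earlier proof, sharpens $1+\Delta$ to $\Delta$ and yields $|\opt(\I_1)|\le\Delta\,|\alg(\I_1)|\le\max\{\theta,2\}\cdot|\alg(\I_1)|$.

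The delicate situation, and the one I expect to be the main obstacle, is the boundary case where $\alg$ accepts nothing in the trust phase, so $|\alg(\I_1)|=0$ and the charging above breaks down. The idea is to use that, since no switch happened one step earlier, the window inspected there was empty, which forces every interval released by the switching time that expires by it to pass through one common point; hence $\I_1$ contributes at most one interval, $|\opt(\I_1)|\le k$, and any interval of $\I_2$ released before the switch passes through that same point, so no feasible solution gathers more than $k$ in total from $\I_1\cup\I_2$ -- unless it uses an $\I_2$-interval released exactly at the switch, which then lies at the start of the suffix handled by the second phase and is therefore charged against $|\alg|$ through its $\theta$-competitiveness. Carefully combining these sub-cases with $|\opt(\I_3)|\le\theta\,|\alg(\I_3)|$ should give $|\opt|\le\max\{\theta,2\}\cdot|\alg|+k$ here as well; assembling all the cases then completes the proof, and the only real work beyond the proof of Theorem~\ref{Thm:Framewok-robustness} is making this boundary bookkeeping airtight, since that is precisely where the sharpening of the additive term from $2k$ to $k$ is fragile.
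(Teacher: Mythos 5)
Your Case~1 is essentially sound and matches the paper's in spirit (charge the at-most-$l_i$ excess and the $\I_2$ straddlers against the accepted interval $I_i$; the paper additionally uses the two-value structure to show the conflicting $\opt$-intervals consist of at most one long and $\lceil\Delta\rceil-1$ short ones, of total length $<2l_i$). The genuine gap is in your treatment of Cases~2 and~3. You commit to keeping $|\opt(\I_2)|\le k$ as the sole additive term in every case and therefore need a \emph{purely multiplicative} bound $|\opt(\I_1)|\le\max\{\theta,2\}\cdot|\alg(\I_1)|$. This is false in general: if during the trust phase $\alg$ has accepted only short intervals of total length, say, $k/\Delta$, and the excess interval crossing the evaluation point has length $k$, then $|\opt(\I_1)|$ can be about $k+k/\Delta$ while $2|\alg(\I_1)|=2k/\Delta$, so the inequality requires $\max\{\theta,2\}\ge \Delta+1$. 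Your escape hatch --- ``any competitive algorithm on two-value instances must be at least $\Delta$-competitive, so $\theta\ge\Delta$'' --- does not hold: the lemma is stated for an arbitrary $\theta$-competitive second-phase algorithm, including randomized ones, and the paper instantiates it with the \virtualalgorithm, which is $2$-competitive on two-value instances regardless of $\Delta$. Even granting $\theta\ge\Delta$, your argument yields $(1+\Delta)\cdot|\alg(\I_1)|$ with only a hand-waved sharpening to $\Delta$, and the resulting bound would be $\Delta\cdot|\alg|+k$ rather than $\max\{\theta,2\}\cdot|\alg|+k$.

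The fix is to flip which class carries the additive term, which is what the paper does: in Cases~2 and~3 the straddling class $\I_2$ is empty, so the bound $|\opt(\I_1\cup\I_2)|\le|\alg(\I_1\cup\I_2)|+k$ from Theorem~\ref{Thm:Framewok-robustness} already contributes only a single additive $k$ and no multiplicative refinement of $\I_1$ is needed there; only in Case~1 is $\I_2$ possibly nonempty, and there the two-value structure lets you absorb its contribution (together with the $\opt$-intervals nested inside $I_i$) into a factor of $2$ on the long accepted interval. Your separate ``boundary'' sub-case ($|\alg(\I_1)|=0$) then disappears: the no-switch condition at the previous step forces $|\opt|=0$ on that prefix, so the additive $k$ covers the single crossing interval, and the elaborate common-point bookkeeping you sketch (which as written asserts, without justification, that all of $\I_1$ passes through one point) is not needed.
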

\begin{proof}
The proof follows the same structure as the proof of Theorem~\ref{Thm:Framewok-robustness}. Assume that the framework decides to switch to the \secondPhase\ at time $r_j$, based on the evaluation of the prediction up to $t_j$, and we have to wait until then to be able to switch. If the framework never switches, we set $j = n$.

To analyze the performance of the algorithm, we partition the intervals into three disjoint classes based on their relation to the switching point $t_j$:
\begin{itemize}
    \item $\I_1 := \I(\cdot, t_j^\leftarrow)$, intervals with both release time and deadline strictly before $t_j$,
    \item $\I_2 := \I(t_j^\leftarrow, t_j^\rightarrow)$, intervals with release time at or before $t_j$ and deadline strictly after $t_j$,
    \item $\I_3 := \I(t_j^\rightarrow, \cdot)$, intervals with release time strictly after $t_j$.
\end{itemize}

As before, the analysis for intervals in $\I_3$ remains unchanged. Our goal is to analyze the intervals in $\I_1$ and $\I_2$ together.

We again distinguish the performance of the algorithm based on how the evaluation point $t_j$ is determined. We consider the following three cases:
\begin{itemize}
    \item \textbf{Case 1:} $t_j = d_i$ for some $i \ne j$, where $I_i \in \predO$ and $d_i = \max\{d_i \mid I_i \in \I(r_j^\leftarrow, \cdot),\ \hat{o}_i = 1\}$,
    \item \textbf{Case 2:} $t_j = d_j$, the deadline of the current interval $I_j$ (since $\hat{o}_j = 1$),
    \item \textbf{Case 3:} $t_j = r_j$, the evaluation point is the release time of $I_j$.
\end{itemize}

Since $\I_2 = \emptyset$ in Cases~2 and~3, the analysis remains unchanged in those cases, and we lose at most an additive factor of $k$:
\[
|\opt(\I_1 \cup \I_2)| \le |\alg(\I_1 \cup \I_2)| + k.
\]

However, we now provide a refined analysis for Case~1 by jointly analyzing the intervals in Classes $\I_1$ and $\I_2$ to establish the following bound:
\[
|\opt(\I_1 \cup \I_2)| \le 2 \cdot |\alg(\I_1 \cup \I_2)| +k.
\]

In this case, we have $t_j = d_i$ for some $i \ne j$. We distinguish two cases based on the length $l_i$.

If $\I_i$ is a short interval, then since the algorithm did not decide to switch at $r_i$, we have:
\[
|\opt(\I(r_i^\leftarrow, d_i^\leftarrow))| \leq |\alg(\I(r_i^\leftarrow, d_i^\leftarrow))| = |\pred(\I(r_i^\leftarrow, d_i^\leftarrow))|.
\]

Moreover, since $\I_i$ is short, we have:
\[
|\opt(\I(r_i^\leftarrow, d_i^\leftarrow))| = |\opt(\I(r_j^\leftarrow, d_i^\leftarrow))| \quad \text{and} \quad |\alg(\I(r_i^\leftarrow, d_i^\leftarrow))| = |\alg(\I(r_j^\leftarrow, d_i^\leftarrow))|.
\]

Therefore, we still have:
\[
|\opt(\I(r_j^\leftarrow, d_i^\leftarrow))| \leq |\alg(\I(r_j^\leftarrow, d_i^\leftarrow))|,
\]
and the algorithm does not decide to switch.

If $l_i = k$, then since the algorithm did not decide to switch at $r_i$, we have:
\[
|\opt(\I(r_i^\leftarrow, d_i^\leftarrow))| \leq |\alg(\I(r_i^\leftarrow, d_i^\leftarrow))| = |\pred(\I(r_i^\leftarrow, d_i^\leftarrow))|.
\]

This implies:
\[
|\opt(\I(r_i^\leftarrow, d_i^\leftarrow))| \leq |\alg(\I(r_i^\leftarrow, r_i^\leftarrow))| + k.
\]

Now we consider the long interval $I_i$ selected by $\alg$, together with all intervals in $\opt$ whose release times lie in $\left(r_i, d_i\right]$, that is, intervals that $\alg$ cannot select due to a conflict with $I_i$.

Among these conflicting intervals, there can be at most one long interval and at most $\lceil \Delta \rceil - 1$ short intervals. Therefore, the total length of these intervals is bounded above by:
\[
k + \left(\lceil \Delta \rceil - 1\right) \cdot \frac{k}{\Delta},
\]
and so their contribution relative to $k$ is:
\[
\frac{k + (\lceil \Delta \rceil - 1)\cdot \frac{k}{\Delta}}{k} = 1 + \frac{\lceil \Delta \rceil - 1}{\Delta} < 2.
\]

Therefore, we can bound the optimal value on $\I_1 \cup \I_2$ as:
\begin{align*}
|\opt(\I_1 \cup \I_2)| &= |\opt(\I(r_i^\leftarrow, d_i^\leftarrow))| + |\opt(\I(\left(r_i, d_i\right], \cdot))| \\
&\leq |\alg(\I(r_i^\leftarrow, r_i^\leftarrow))| + k + 2 \cdot |\alg(\I(r_i^\rightarrow, d_i^\leftarrow))| \\
&\leq 2 \cdot |\alg(\I_1 \cup \I_2)| + k.
\end{align*}

Considering all cases, we have shown that the value of the optimal solution for intervals in $\I_1 \cup \I_2$ is at most
\[
|\opt(\I_1 \cup \I_2)| \le 2 \cdot |\alg(\I_1 \cup \I_2)| + k.
\]

Since intervals in different classes might overlap with each other, we can bound the optimal value as:
\[
|\opt| \leq |\opt(\I_1 \cup \I_2)| + |\opt(\I_3)|.
\]

Putting all parts together,
\begin{align*}
|\opt| &\leq |\opt(\I_1 \cup \I_2)| + |\opt(\I_3)| \\
       &\le 2 \cdot |\alg(\I_1 \cup \I_2)| + k + \theta \cdot |\alg(\I_3)|.
\end{align*}

Since $\theta > 1$, we conclude
\[
|\opt| \leq \max\{\theta, 2\} \cdot |\alg| + k.
\]

This completes the proof.
\end{proof}

\subsection{Tightness}
In this final subsection, we show that the guarantees of the \textsc{Trust-and-Switch} framework are tight for two-value instances. First, we prove that the \greedy\ algorithm achieves a robustness guarantee in this setting. Then, we show that even under the strongest prediction model $\predI$, no algorithm with consistency better than $1 + \frac{\lceil \Delta \rceil - 1}{\Delta}$ can achieve a stronger robustness guarantee. This establishes the tightness of the \textsc{Trust-and-Switch} framework for two-value instances.
While our lower bound extends to general instances as well, there remains a gap between the upper and lower bounds in the general setting.

\subsubsection{\greedy\ Algorithm}

We first bound the competitive ratio of the \greedy\ algorithm on two-value instances. This will allow us to instantiate the robustness guarantee in our framework. Specifically, we show that \textsc{Trust-and-Switch}(\greedy) satisfies $1$-consistency and guarantees
\[
|\opt| \le \max\{\Delta, 2\} \cdot |\textsc{Trust-and-Switch}(\greedy)| + k.
\]
We briefly recall the definition of the \greedy\ algorithm below.

\paragraph{\greedy.}
The \greedy\ algorithm accepts an arriving interval if and only if it does not overlap with any previously accepted interval.

\begin{lemma}\label{lemma:greedyub}
    Given a two-value instance $\I$, let $\greedy(\I)$ be the solution obtained by the \greedy\ algorithm. Then,
    \[
    |\opt(\I)| \leq \max\{\Delta, 2\} \cdot |\greedy(\I)|.
    \]
\end{lemma}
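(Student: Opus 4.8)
The plan is a charging argument that maps each interval of $\opt(\I)$ to an interval of $\greedy(\I)$, crucially exploiting the assumption that intervals arrive in non-decreasing order of release time. Write $G = \greedy(\I)$ and $O = \opt(\I)$ for the two solutions viewed as sets of intervals. Since $\greedy$ accepts an interval exactly when it conflicts with no previously accepted one, whenever $\greedy$ rejects an interval $I$ there is an already-accepted $J \in G$ overlapping $I$; because $J$ was processed before $I$ we have $r_J \le r_I$, and overlap then forces $r_I < d_J$. Define $\phi \colon O \to G$ by $\phi(I) = I$ when $I \in G$, and $\phi(I) = $ (some) such blocking interval $J$ otherwise. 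The whole lemma reduces to showing $\sum_{I \in \phi^{-1}(J)} l_I \le \max\{\Delta,2\}\, l_J$ for every $J \in G$, since summing over $J \in G$ then gives $|\opt(\I)| = \sum_{I\in O} l_I \le \max\{\Delta,2\}\,|\greedy(\I)|$.

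Two structural facts drive everything. First, for $I \in O \setminus G$ the interval $\phi(I)$ overlaps $I$ and is distinct from it, so $\phi(I) \notin O$ (optimal intervals are pairwise disjoint); hence any $J \in G \cap O$ satisfies $\phi^{-1}(J) = \{J\}$, contributing exactly $l_J$. Second, for any $J \in G$, every $I \in \phi^{-1}(J) \setminus \{J\}$ has $r_I \in [r_J, d_J)$, and all intervals in $\phi^{-1}(J)$ are pairwise disjoint (they lie in $O$). So I only need to bound the total length of a set of pairwise disjoint intervals whose release times all lie in a window of length $l_J$.

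It remains to do this in two cases according to whether $J$ is short (length $s = k/\Delta$) or long (length $k$). If $J$ is short, any interval charged to $J$ other than $J$ itself has release time inside a window of length $s$; a second such interval would, together with the first (which has length at least $s$ and is disjoint from the second, while $k \ge s$), force a release time at least $r_J + s = d_J$, a contradiction, so at most one interval is charged, of length at most $k = \Delta\, l_J$. If $J$ is long, the charged intervals have release times in a window of length $k$: at most one of them can be long (two disjoint long intervals cannot both start within a window of length $k$), and whenever a long charged interval is present no short charged interval can start after it (that would need a release time $\ge r_J + k = d_J$), so all short charged intervals precede it and, being disjoint of length $\ge k/\Delta$ inside a window of length $< k$, number at most $\lceil \Delta \rceil - 1$; the no‑long subcase is dominated by this. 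Thus the total charged length is at most $k + (\lceil \Delta \rceil - 1)\tfrac{k}{\Delta} = \bigl(1 + \tfrac{\lceil \Delta \rceil - 1}{\Delta}\bigr)k < 2k \le \max\{\Delta,2\}\, l_J$ — the same estimate already used in Case~1 of Lemma~\ref{lem:TrustFramewok-2value-robustness}.

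The main obstacle is the bookkeeping of the charging map rather than any hard inequality: one must verify that greedy intervals lying in $\opt$ are charged only their own length, that each OPT interval charged to a greedy interval genuinely has its release time inside that interval's span (this is exactly where the release-time ordering is used), and that no OPT interval is counted twice. The mildly delicate points are the count of short disjoint intervals inside a length-$k$ window and the interaction of the two lengths when $\Delta$ is near $1$ (the degenerate case $\Delta = 1$, where all lengths are equal, is handled trivially since $\max\{\Delta,2\} = 2$); everything else is routine.
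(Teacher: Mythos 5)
Your proposal is correct and follows essentially the same route as the paper's proof: a charging map sending each interval of $\opt(\I)$ to the (earlier-released) greedy interval that blocks it, with $f^{-1}(J)$ bounded by $\Delta\, l_J$ when $J$ is short (at most one charged interval) and by $2l_J$ when $J$ is long (at most one long plus disjoint shorts in $J$'s window). Your write-up merely spells out the counting of short charged intervals more explicitly than the paper does; no substantive difference.
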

\begin{proof}
    We first note that by the definition of $\greedy$ for each interval $\I_i\in\opt(\I)$ there exists exactly one interval $I_j\in\greedy(\I)$ with $r_j\le r_i$ and $I_i\cap I_j\neq \varnothing$  (note that it could be that $I_i=I_j$). We define $f:\opt(\I)\rightarrow \greedy(\I)$ such that for each $I_i\in\opt(\I)$, $f(I_i)$ be exactly the interval $I_j\in\greedy(\I)$ as described above.

    Note that $f$ is many-to-one and furthermore for each interval $I_j\in \greedy(\I)$, $f^{-1}(I_j)$ can consist of at most one interval if $I_j$ is short (and thus $|f^{-1}(I_j)|\le \Delta l_s$ in this case), and $\sum_{I_i \in f^{-1}(I_j)} |I_i| < 2\cdot k$ if $I_j$ is long -- since  at most one interval of $f^{-1}(I_j)$ can be long. This gives

    \begin{align*}
    |\opt(\I)| &= 
    \sum_{I_i\in\opt(\I)} |I_i|\\
    &= 
    \sum_{\substack{I_i\in\opt(\I)\\ f(I_i) \text{ short}}} |I_i| + \sum_{\substack{I_i\in\opt(\I)\\ f(I_i) \text{ long}}} |I_i|\\
    &\le \Delta \cdot \sum_{\substack{I_i\in\opt(\I)\\ f(I_i) \text{ short}}} |f(I_i)| + 2\cdot \sum_{\substack{I_i\in\opt(\I)\\ f(I_i) \text{ long}}} |f(I_i)|\\
    &\le \max\{2,\Delta\} |\greedy(\I)|.
    \end{align*}
\end{proof}

Now, using Theorem~\ref{Thm:Framewok-robustness}, and Lemmas~\ref{lem:TrustFramewok-2value-robustness}, and~\ref{lemma:greedyub}, we can conclude the following proposition.

\begin{proposition}
Given a two-value instance and predictions $\predO$, \textsc{Trust-and-Switch}(\greedy) satisfies $1$-consistency. Moreover, it guarantees
\[
|\opt| \le \max\{\Delta, 2\} \cdot |\textsc{Trust-and-Switch}(\greedy)| + k,
\]
independent of the quality of the predictions.
\end{proposition}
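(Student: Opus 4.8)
The plan is to derive both claims by stitching together the three earlier results, so the work is mostly bookkeeping; the one place that needs care is checking that the hypothesis of Lemma~\ref{lem:TrustFramewok-2value-robustness} is actually met when \greedy\ is used in the \secondPhase. First I would dispatch consistency: Lemma~\ref{lem:Framewok-1-cons} establishes $1$-consistency of the \textsc{Trust-and-Switch} framework for an \emph{arbitrary} classic algorithm in the \secondPhase, so instantiating it with \greedy\ gives $1$-consistency immediately, without invoking the two-value structure at all.

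For robustness, I would start from Lemma~\ref{lemma:greedyub}, which says that on any two-value instance $\greedy$ satisfies $|\opt|\le\max\{\Delta,2\}\cdot|\greedy|$, i.e.\ it is $\max\{\Delta,2\}$-competitive. The key observation to record is that whenever \textsc{Trust-and-Switch} switches, its \secondPhase\ runs \greedy\ on a suffix instance (the intervals released after the switching point), which is again a two-value instance whose longest-to-shortest length ratio is at most the global $\Delta$; hence \greedy\ remains $\max\{\Delta,2\}$-competitive on it. This lets me apply Lemma~\ref{lem:TrustFramewok-2value-robustness} with $\theta=\max\{\Delta,2\}$ (which satisfies $\theta\ge 2>1$ as required, since $\Delta\ge 1$), obtaining
\[
|\opt|\le\max\{\theta,2\}\cdot|\textsc{Trust-and-Switch}(\greedy)|+k=\max\{\Delta,2\}\cdot|\textsc{Trust-and-Switch}(\greedy)|+k,
\]
using $\max\{\max\{\Delta,2\},2\}=\max\{\Delta,2\}$. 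Because Lemma~\ref{lem:TrustFramewok-2value-robustness} is stated for every instance and every prediction, this bound is independent of the prediction quality, which finishes the argument.

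The main (and essentially only) obstacle is the suffix observation in the robustness step: one has to be sure the competitive guarantee plugged into the $\theta$-competitive slot of Lemma~\ref{lem:TrustFramewok-2value-robustness} is the guarantee for the suffix \greedy\ actually operates on, and that the relevant $\Delta$ there is dominated by the global $\Delta$. Everything else is routine — in particular, Lemma~\ref{lemma:greedyub} is a purely multiplicative bound, so it slots in without contributing any additive term beyond the $+k$ already present in Lemma~\ref{lem:TrustFramewok-2value-robustness}. With these checks the proposition follows directly.
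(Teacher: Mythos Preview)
Your proposal is correct and follows essentially the same route as the paper, which simply cites Theorem~\ref{Thm:Framewok-robustness} together with Lemmas~\ref{lem:TrustFramewok-2value-robustness} and~\ref{lemma:greedyub} and leaves the stitching implicit. Your explicit check that the suffix instance remains a two-value instance with ratio at most the global~$\Delta$ (so that Lemma~\ref{lemma:greedyub} still applies there) is a useful clarification the paper omits.
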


\subsubsection{Lower Bound}

We now show that the consistency–robustness trade-off achieved by the \textsc{Trust-and-Switch} framework is tight for two-value instances. Even with access to full information through $\predI$, no algorithm with sufficiently strong consistency can guarantee a better robustness bound.

\begin{lemma}\label{lem:LB-2value}
For any $(1 + \alpha)$-consistent algorithm $\alg'$ for two-value instances, with $\alpha < \frac{\lceil \Delta \rceil - 1}{\Delta}$, there exists an instance $\I$ such that
\[
\Delta \cdot |\alg'(\I)| + k \le |\opt(\I)|,
\]
even when provided with full predictions $\predI$.
\end{lemma}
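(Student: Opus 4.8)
The goal is a lower-bound instance showing that any algorithm consistent enough (better than $1+\frac{\lceil\Delta\rceil-1}{\Delta}$) must pay a robustness loss of at least $\Delta\cdot|\alg'| + k$. The natural strategy is an adversary argument: I would present the algorithm with a short prefix, observe its (deterministic) decision, and then extend the instance in one of two ways so that whichever choice the algorithm made is punished. The prediction $\predI$ is revealed up front; the adversary's trick is that the \emph{prediction} is consistent with one of the two continuations, and the algorithm does not know which continuation is the real one until it is too late.

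\textbf{Construction.} I would use a long interval $L$ of length $k$ released at time $0$, and a block of $\lceil\Delta\rceil$ short intervals of length $k/\Delta$ that tile the span $[0,k]$ (more precisely, $\lceil\Delta\rceil-1$ or $\lceil\Delta\rceil$ of them packed disjointly inside $[0,k]$, plus possibly one more short interval to realize the $+k$ term — the exact count is what makes the $\lceil\Delta\rceil-1$ threshold appear). The full prediction $\predI$ declares that the optimum accepts all the short intervals (and some later long interval), i.e. it is the ``short'' scenario. Now there are two instances sharing this common prefix: in Instance $A$, after the short block a further long interval of length $k$ arrives disjoint from everything, and $\opt$ takes all shorts plus that long one; in Instance $B$, nothing further of value arrives, but a long interval $L$ was available at time $0$ overlapping the whole short block. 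The key point: an algorithm aiming for $(1+\alpha)$-consistency with $\alpha<\frac{\lceil\Delta\rceil-1}{\Delta}$ is, in the good-prediction instance, forced to accept essentially all the short intervals — accepting $L$ instead, or leaving too many shorts unaccepted, already violates consistency because $|\opt|=\lceil\Delta\rceil\cdot\frac{k}{\Delta}+(\text{later long})$ while taking $L$ plus the later long gives only $k+(\text{later long})$, a ratio worse than $1+\alpha$. Having committed to the shorts, the algorithm has irrevocably rejected $L$; but then the adversary runs the continuation where no further long interval arrives, so $|\alg'| \le \lceil\Delta\rceil\cdot\frac{k}{\Delta}$-ish while $\opt$ can still be made to contain $L$ plus one short, giving the $\Delta\cdot|\alg'|+k$ gap. (One has to be a little careful that $\predI$ is ``accurate'' for whichever instance we finally commit to — so the prediction should describe the instance where the algorithm is punished, and we argue that to be consistent on \emph{that} instance the algorithm is forced into the bad decision.)

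\textbf{Key steps, in order.} (1) Fix the short length $l_s = k/\Delta$ and lay out the short block and $L$ explicitly, together with $\predI$. (2) Show that $(1+\alpha)$-consistency with the stated bound on $\alpha$ forces the algorithm, on the ``short'' instance, to accept at least $\lceil\Delta\rceil$ short intervals' worth of length — equivalently, to reject $L$: compute the consistency ratio attained if the algorithm deviates and check it exceeds $1+\alpha$ precisely when $\alpha<\frac{\lceil\Delta\rceil-1}{\Delta}$. (3) Since the decision on $L$ is made before any distinguishing information, the same rejection happens on the companion instance. (4) On that companion instance bound $|\alg'|$ from above and $|\opt|$ from below and verify $\Delta\cdot|\alg'|+k\le|\opt|$. (5) Confirm the prediction is accurate for the instance we charge against.

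\textbf{Main obstacle.} The delicate part is aligning the ceiling arithmetic: getting the adversary's two continuations to share a prefix on which $(1+\alpha)$-consistency \emph{exactly} forces rejecting $L$ at the threshold $\alpha=\frac{\lceil\Delta\rceil-1}{\Delta}$, and simultaneously arranging that the punished instance still admits an optimum of size $\Delta\cdot|\alg'|+k$ (the ``$+k$'' needs one extra short interval or a well-placed long one that $\opt$ can grab but $\alg'$ cannot, without breaking consistency on the good instance). Making all of $|\opt|$, $|\alg'|$, consistency, and prediction-accuracy hold simultaneously with non-integer $\Delta$ is the real content; everything else is bookkeeping.
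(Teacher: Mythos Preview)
Your proposal has two genuine gaps that prevent the argument from going through.

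\textbf{The consistency forcing is too weak.} You let the ``later long'' interval be disjoint from $L$. But then an algorithm that accepts $L$ can also accept the later long, obtaining $|\alg'| = 2k$, while your optimum is only $\lceil\Delta\rceil\cdot\tfrac{k}{\Delta}+k$. The resulting ratio is $\tfrac{\lceil\Delta\rceil/\Delta+1}{2}$, which for integer $\Delta$ equals $1$ and in general is far below $1+\tfrac{\lceil\Delta\rceil-1}{\Delta}$. So accepting $L$ does \emph{not} violate $(1+\alpha)$-consistency in your instance. The paper fixes this by making the final long interval $I_n$ \emph{overlap} $I_1$ (and using $\lceil\Delta\rceil-1$ shorts that fit inside $[0,k)$): then accepting $I_1$ blocks both the shorts \emph{and} $I_n$, leaving $|\alg'|=k$ against $|\opt|=(\lceil\Delta\rceil-1)\tfrac{k}{\Delta}+k$, and the ratio is exactly $1+\tfrac{\lceil\Delta\rceil-1}{\Delta}$.

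\textbf{The robustness gap is off by a factor of $\Delta$.} Once the algorithm has rejected $L$ and accepted the short block, your ``bad continuation'' gives $|\alg'|\approx k$ and $|\opt|\approx k+\tfrac{k}{\Delta}$. But the lemma requires $|\opt|\ge\Delta\cdot|\alg'|+k\approx \Delta k+k$; your instance is nowhere near this. A fixed continuation cannot work here: after the algorithm has been forced to reject $I_1$ (losing an additive $k$), the paper appends the classical \emph{adaptive} adversary for two-value interval scheduling---release short intervals one by one; if the algorithm accepts one, pile the remaining intervals and the final long on top of it; if it never accepts, the shorts spread out disjointly. This adaptive tail is what forces ratio $\ge\Delta$ on the suffix, and combined with the lost $I_1$ yields $|\opt|\ge\Delta|\alg'|+k$. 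Your plan never invokes such an adaptive step, and without it the numbers cannot be made to match. Note also that in this branch the prediction is \emph{not} accurate (the actual instance deviates from $\predI$); your step~(5) is unnecessary, since the claim is about robustness, not consistency.
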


\begin{proof}
We construct an adversarial input sequence. Note that this instance can be constructed even if the total number of intervals is known in advance. The key idea is to present the algorithm with a long interval that is not predicted by $\predO$. If this interval is accepted, consistency is violated by introducing many short intervals and an additional long interval that do not conflict with each other but do conflict with the accepted interval. Conversely, if the long interval is rejected, the algorithm immediately incurs a loss of $k$, assuming that the predictions are inaccurate.

We begin by describing the predicted instance $\predI$ that the algorithm receives a priori. We then define the actual instance $\Iadversary$, which is constructed adaptively based on the decisions made by the algorithm $\alg'$.

\paragraph{Predicted Instance $\predI$.} 
$\predI$ consists of $\lceil \Delta \rceil^2 + 2$ intervals, indexed in non-decreasing order of release time. These intervals are defined recursively as follow:

\begin{itemize}
    \item $I_1 = \left[0,\ k\right)$ — a long interval,
    \item $I_2 = \left[\epsilon,\ \epsilon + \frac{k}{\Delta} \right)$ — the first short interval,
    \item $I_j = \left[d_{j-1} + \epsilon,\ d_{j-1} + \epsilon + \frac{k}{\Delta} \right)$ for $j = 3,\dots,\lceil \Delta \rceil$ — a chain of short intervals following $I_2$,
    \item $I_j = \left[k - \epsilon \cdot \frac{k}{\Delta},\ k - \epsilon \right)$ for $j = \lceil \Delta \rceil + 1,\dots,\lceil \Delta \rceil^2 + 1$ — a block of overlapping short intervals packed at the end of the long interval $I_1$,
    \item $I_n = \left[k - \epsilon,\ 2k - \epsilon \right)$ — the final long interval that touches the end of $I_1$.
\end{itemize}

Here, $\epsilon > 0$ is chosen such that 
\[
\epsilon < 1 + \frac{1 - \lceil \Delta \rceil}{\Delta},
\]
ensuring that all intervals are non-overlapping within their respective groups. These intervals are visualized in Figure~\ref{fig:bad-instance}.

\begin{figure}[!hbt]
    \centering
    \begin{tikzpicture}[scale=1]
        % Define intervals as {start, end, y}
        \foreach \xstart/\xend/\y in {
            0.4/0/5
        } {
            \drawInterval[MyBlue]{solid}{\xstart}{\xend}{\y};
        }
        \foreach \xstart/\xend/\y in {
            0.8/3.7/4.7,
            1.2/3.7/4.7,
            2/3.7/4.7
        } {
            \drawInterval[gray]{solid}{\xstart}{\xend}{\y};
        }
        \foreach \xstart/\xend/\y in {
            0.0/0.2/1.2,
            0.0/1.4/2.4,
            0.0/3.6/4.6,
            0.0/4.8/9.8
        } {
            \drawInterval[MyRed]{solid}{\xstart}{\xend}{\y};
        }

        \node[gray] at (3, 0) {$\cdots$};
        \node[gray] at (4.2, 1.7) {$\vdots$};
        
    \end{tikzpicture}
    \caption{Illustration of prediction $\predI$. The red intervals represent the optimal solution for this set of intervals.}
    \label{fig:bad-instance}
\end{figure}
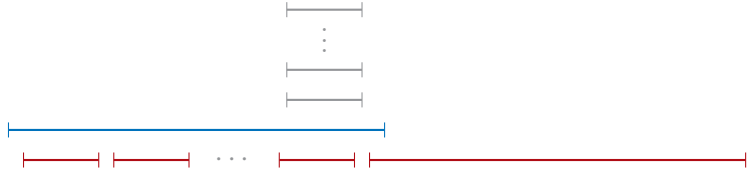

Note that the optimal solution for instance $\predI$ consists of 
\[
\bigcup_{j \in \{2,\dots,\lceil \Delta \rceil\} \cup \{n\}} I_j,
\]
and has a total value of 
\[
(\lceil\Delta\rceil - 1) \cdot \frac{k}{\Delta} + k.
\]

We are now ready to construct the adversarial instance $\Iadversary$.

\paragraph{Adversarial Instance $\Iadversary$.}
We start by releasing the interval $I_1$ as described above. We distinguish two cases based on whether $I_1$ is selected by $\alg'$ or not (see also Figure~\ref{fig:theorem1-adversary-instance}):

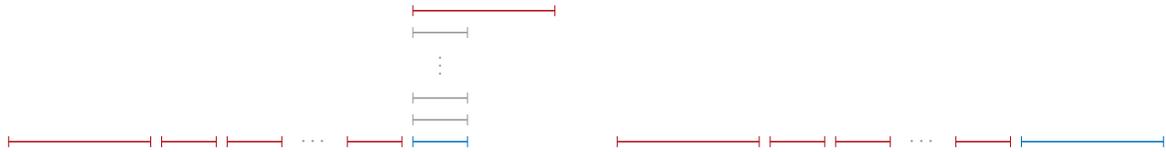
\begin{figure}[!hbt]
    \centering
    \begin{subfigure}{0.48\columnwidth}
    \resizebox{\columnwidth}{!}{
    \begin{tikzpicture}
        % Define intervals as {start, end, y}
        \foreach \xstart/\xend/\y in {
            0/0/2.6,
            0/2.8/3.8,
            0/4/5,
            0/6.2/7.2,
            2.4/7.4/10
        } {
            \drawInterval[MyRed]{solid}{\xstart}{\xend}{\y};
        }
        \foreach \xstart/\xend/\y in {
            0/7.4/8.4
        } {
            \drawInterval[MyBlue]{solid}{\xstart}{\xend}{\y};
        }
        \foreach \xstart/\xend/\y in {
            0.4/7.4/8.4,
            0.8/7.4/8.4,
            2/7.4/8.4
        } {
            \drawInterval[gray]{solid}{\xstart}{\xend}{\y};
        }

        \node[gray] at (5.6, 0) {$\cdots$};
        \node[gray] at (7.9, 1.5) {$\vdots$};
    
    \end{tikzpicture}
    }
    \caption{The adversarial input when the \alg\ accepts a short interval.}
    \label{fig:adversary-instance-case1}
    \end{subfigure}
    \hfill
    \begin{subfigure}{0.48\columnwidth}
        \resizebox{\columnwidth}{!}{
        \begin{tikzpicture}
            % Define intervals as {start, end, y}
            \foreach \xstart/\xend/\y in {
                0/0/2.6,
                0/2.8/3.8,
                0/4/5,
                0/6.2/7.2
            } {
                \drawInterval[MyRed]{solid}{\xstart}{\xend}{\y};
            }
            \foreach \xstart/\xend/\y in {
                0/7.4/10
            } {
                \drawInterval[MyBlue]{solid}{\xstart}{\xend}{\y};
            }
    
            \node[gray] at (5.6, 0) {$\cdots$};        
        \end{tikzpicture}
        }
        \caption{The adversarial input when the \alg\ never accepts a short interval.}
        \label{fig:adversary-instance-case2}
    \end{subfigure}
    \caption{The two cases of the adversarial input. In both figures, the red intervals represent the intervals in the optimal solution. In case~\ref{fig:adversary-instance-case1}, the blue interval appears only in the \alg, whereas in case~\ref{fig:adversary-instance-case2}, it is included in both the optimal solution and the \alg.}
    \label{fig:theorem1-adversary-instance}
\end{figure}

\begin{itemize}
    \item \textbf{Case 1: $\alg'$ accepts $I_1$} \\
    In this case, we set $\Iadversary = \predI$, and argue that $\alg'$ cannot satisfy the required consistency. Note that the prediction is perfectly accurate, and yet $\alg'$ obtains only a total value of $k$ on $\Iadversary$. The value of the optimal solution is
    \[
    (\lceil\Delta\rceil - 1) \cdot \frac{k}{\Delta} + k,
    \]
    resulting in the following ratio between the values of $\opt$ and $\alg'$:
    \[
    \frac{(\lceil\Delta\rceil - 1)\cdot \frac{k}{\Delta} + k}{k} = 1 + \frac{\lceil \Delta \rceil - 1}{\Delta}.
    \]    

    \item \textbf{Case 2: $\alg'$ rejects $I_1$} \\ 
    In this case, the remaining part of the instance consists of $\lceil \Delta \rceil^2 + 1$ intervals that arrive sequentially. The first $\lceil \Delta \rceil^2$ intervals are short, and the final one is a long interval. The exact release times are defined adaptively based on the decisions made by $\alg'$. More specifically, for each interval $I_i$, if $\alg'$ does not select $I_i$, then $I_{i+1}$ is released after the deadline of $I_i$. Otherwise, $I_{i+1}$ and all subsequent intervals are released so that they overlap with $I_i$. Figure~\ref{fig:general-adversary-instance} depicts the two different possibilities for $\Iadversary$.
    
    There are two subcases to consider:
    
    \begin{itemize}
        \item \textbf{$\alg'$ selects some short interval $I_i$.} \\
        By construction, this is the only interval that $\alg'$ can select, since all subsequent intervals overlap with $I_i$. On the other hand, the optimal algorithm can reject all short intervals and instead select the final long interval (and potentially some non-overlapping short ones). Thus, $|\alg'(\Iadversary)| \le \frac{k}{\Delta}$, while $|\opt(\Iadversary)| \ge k$.
    
        \item \textbf{$\alg'$ selects no short interval $I_i$.} \\
        In this case, $\alg'$ can only select the final long interval. However, the optimal algorithm can select the long interval as well as all $\lceil \Delta \rceil^2$ short intervals, which do not overlap with each other. That is, $|\alg'(\Iadversary)| \le k$, while
        \[
        |\opt(\Iadversary)| \ge \lceil \Delta \rceil^2 \cdot \frac{k}{\Delta}.
        \]
    \end{itemize}
    
    Overall, this gives a competitive ratio of at least $\Delta$. Since the instance is constructed independently of the prediction, the competitive ratio between $\alg'(\Iadversary)$ and $\opt(\Iadversary)$ is $\Delta$. Additionally, $\alg'$ loses the first long interval, incurring an additive loss of $k$.

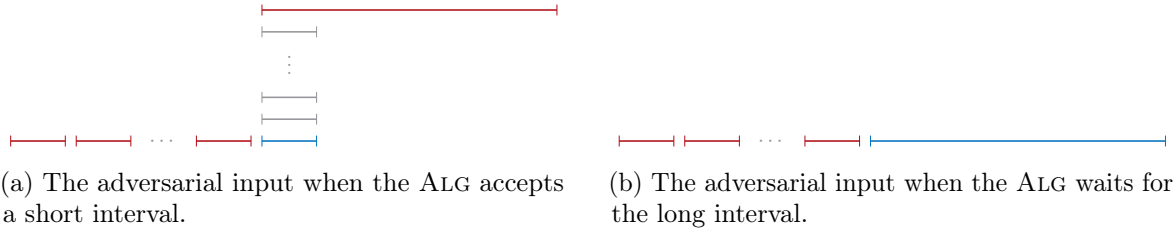
\begin{figure}[!hbt]
    \centering
    \begin{subfigure}{0.48\columnwidth}
    \resizebox{\columnwidth}{!}{
    \begin{tikzpicture}
        % Define intervals as {start, end, y}
        \foreach \xstart/\xend/\y in {
            0/0/1,
            0/1.2/2.2,
            0/3.4/4.4,
            2.4/4.6/10
        } {
            \drawInterval[MyRed]{solid}{\xstart}{\xend}{\y};
        }
        \foreach \xstart/\xend/\y in {
            0/4.6/5.6
        } {
            \drawInterval[MyBlue]{solid}{\xstart}{\xend}{\y};
        }
        \foreach \xstart/\xend/\y in {
            0.4/4.6/5.6,
            0.8/4.6/5.6,
            2/4.6/5.6
        } {
            \drawInterval[gray]{solid}{\xstart}{\xend}{\y};
        }

        \node[gray] at (2.8, 0) {$\cdots$};
        \node[gray] at (5.1, 1.5) {$\vdots$};
    
    \end{tikzpicture}
    }
    \caption{The adversarial input when the \alg\ accepts a short interval.}
    \label{fig:general-adversary-instance-case1}
    \end{subfigure}%
    \hfill
    \begin{subfigure}{0.48\columnwidth}
        \resizebox{\columnwidth}{!}{
        \begin{tikzpicture}
            % Define intervals as {start, end, y}
            \foreach \xstart/\xend/\y in {
                0/0/1,
                0/1.2/2.2,
                0/3.4/4.4
            } {
                \drawInterval[MyRed]{solid}{\xstart}{\xend}{\y};
            }
            \foreach \xstart/\xend/\y in {
                0/4.6/10
            } {
                \drawInterval[MyBlue]{solid}{\xstart}{\xend}{\y};
            }
    
            \node[gray] at (2.8, 0) {$\cdots$};        
        \end{tikzpicture}
        }
        \caption{The adversarial input when the \alg\ waits for the long interval.}
        \label{fig:general-adversary-instance-case2}
    \end{subfigure}
    \caption{The two cases of the adversarial input. In both figures, the red intervals represent the intervals selected by the optimal solution. In case~\ref{fig:general-adversary-instance-case1}, the blue interval is only selected by\alg, whereas in case~\ref{fig:general-adversary-instance-case2}, it is selected by both the optimal solution and \alg. The gray intervals are not selected by neither $\alg$ nor $\opt$. 
    }
    \label{fig:general-adversary-instance}
\end{figure}
    
\end{itemize}
    
\end{proof}

\section{\textsc{SemiTrust-and-Switch} Framework}
In this section, we turn to a more general variant of the \textsc{Trust-and-Switch} framework that relaxes its reliance on the predictions in the \firstPhase. The goal is to achieve a better consistency–robustness trade-off. The key idea is to define a length threshold and follow the prediction only for intervals whose lengths fall below this threshold. Although this approach sacrifices $1$-consistency unless the threshold is set very high, it allows for improved robustness guarantees when predictions are unreliable.

Intuitively, this method ensures that the algorithm does not miss promising long intervals in the case of inaccurate predictions, thereby improving its worst-case performance.

\paragraph{Framework \textsc{SemiTrust-and-Switch}.} 
In this framework, the algorithm uses a length threshold $\tau$ to determine whether to follow the prediction. More formally, in the \firstPhase, the algorithm accepts an interval if either (i) the prediction suggests it should be accepted, or (ii) the interval’s length exceeds the threshold $\tau$. As in \textsc{Trust-and-Switch}, the algorithm evaluates the predictions over time and switches to a classical algorithm in the \secondPhase\ using the same strategy.

Note that the framework uses the $\pred$ algorithm to evaluate the quality of the prediction. More specifically, it compares the value of the predicted solution with the offline optimum up to time $t_j$, and switches to the \secondPhase\ if
\[
|\opt(\I(r_j^\leftarrow, t_j^\leftarrow))| > |\pred(\I(r_j^\leftarrow, t_j^\leftarrow))|.
\]

Since \textsc{SemiTrust-and-Switch} uses the same evaluation procedure and switching strategy as the \textsc{Trust-and-Switch} framework, we have the following observation.

\begin{observation}\label{obs:SemiFramework-cons-noSwitch}
\textsc{SemiTrust-and-Switch} switches to the \secondPhase\ only if the predictions are inaccurate.
\end{observation}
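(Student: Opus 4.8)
The plan is to reduce this to the argument already established for the \textsc{Trust-and-Switch} framework. The crucial point is that the switching decision of \textsc{SemiTrust-and-Switch} is governed by exactly the same rule as in \textsc{Trust-and-Switch}: at each release time $r_j$ the framework forms the evaluation point $t_j$, checks whether the predicted intervals in $\I(r_j^\leftarrow,\cdot)$ are pairwise non-overlapping, and (if so) compares $|\opt(\I(r_j^\leftarrow, t_j^\leftarrow))|$ against $|\pred(\I(r_j^\leftarrow, t_j^\leftarrow))|$. Neither of these tests refers to the intervals that \textsc{SemiTrust-and-Switch} actually accepts in the \firstPhase\ --- in particular, the $\tau$-threshold rule that distinguishes this framework from \textsc{Trust-and-Switch} plays no role in the switching logic, since the evaluation is always carried out against $\pred$, the blind prediction-follower. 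Hence the proof of Lemma~\ref{lem:Framewok-1-cons} applies verbatim.

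Concretely, I would assume for contradiction that the prediction $\predO$ is accurate yet the framework switches at some evaluation point $t_j$. An accurate $\predO$ encodes an optimal, hence feasible, solution, so the predicted intervals are pairwise non-overlapping and the consistency test does not trigger the switch; therefore the comparison test must have fired, i.e.\ $|\opt(\I(r_j^\leftarrow, t_j^\leftarrow))| > |\pred(\I(r_j^\leftarrow, t_j^\leftarrow))|$. As in Lemma~\ref{lem:Framewok-1-cons}, because $t_j$ is a release time or deadline of a predicted interval (or the release time of an interval disjoint from all predicted ones), no predicted interval straddles $t_j$, so the predicted intervals split into $\I_{\text{before}} = \{I_i : \hat{o}_i = 1,\ I_i \in \I(r_j^\leftarrow, t_j^\leftarrow)\}$ and $\I_{\text{after}} = \{I_i : \hat{o}_i = 1,\ I_i \in \I(t_j^\rightarrow, \cdot)\}$. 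Replacing $\I_{\text{before}}$ by the intervals of $\opt(\I(r_j^\leftarrow, t_j^\leftarrow))$ while keeping $\I_{\text{after}}$ unchanged yields a feasible solution of total length strictly exceeding that of $\predO$, contradicting the optimality of the prediction.

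Thus \textsc{SemiTrust-and-Switch} never leaves the \firstPhase\ when the predictions are accurate, which is precisely the claim that it switches only if the predictions are inaccurate. There is no real obstacle here beyond making explicit that the switching criterion is evaluated through $\pred$ and is therefore insensitive to the modified \firstPhase\ behavior induced by $\tau$; once this observation is in place, the earlier argument transfers with no change.
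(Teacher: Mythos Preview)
Your proposal is correct and matches the paper's approach exactly: the paper justifies the observation with a single sentence noting that \textsc{SemiTrust-and-Switch} uses the same evaluation procedure and switching strategy as \textsc{Trust-and-Switch}, so the conclusion of Lemma~\ref{lem:Framewok-1-cons} carries over. You have simply made this reduction explicit and unpacked the earlier argument, which is entirely appropriate.
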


We now analyze the performance of the \textsc{SemiTrust-and-Switch} framework as a function of the threshold parameter $\tau$. Recall that in the \firstPhase, the algorithm accepts an interval if either the prediction suggests it should be accepted, or its length exceeds $\tau$. If $\tau$ is set below the length of the shortest interval, the algorithm accepts all intervals greedily, behaving identically to \greedy. On the other hand, if $\tau > k$, where $k$ is the maximum interval length, then no interval is accepted based on length alone, and the algorithm behaves identically to \textsc{Trust-and-Switch}(\greedy), relying fully on predictions in the \firstPhase.

The interesting regime occurs when the threshold $\tau$ lies strictly between the shortest and longest interval lengths. Specifically, we assume throughout this section that
\[
\frac{k}{\Delta} < \tau < k,
\]
so that the algorithm accepts long intervals (with length $> \tau$) greedily, while relying on predictions only for short intervals (with length $\leq \tau$). This setting improves robustness by ensuring that long, high-value intervals are not missed due to inaccurate predictions, while still maintaining reasonable consistency guarantees.

\begin{lemma}\label{lem:SemiFramewok-cons}
The \textsc{SemiTrust-and-Switch} framework satisfies $(1 + \frac{k}{\tau})$-consistency.
\end{lemma}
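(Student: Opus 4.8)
The plan is to invoke Observation~\ref{obs:SemiFramework-cons-noSwitch}: when the prediction is accurate, the framework never leaves the \firstPhase, so it suffices to reason about the first-phase acceptance rule on the whole instance. Since the prediction is accurate, the set $\opt := \{I_i : \hat{o}_i = 1\}$ is an optimal feasible solution, and I will work with this identification. I first record a structural decomposition of the output $\alg$: it is the disjoint union of $\alg \cap \opt$ and a set $L$ consisting exactly of the intervals accepted \emph{only} because of the length rule, that is, intervals $I_j$ with $\hat{o}_j = 0$ and $l_j > \tau$. Indeed, any accepted interval with $\hat{o}_j = 1$ belongs to $\opt$, while an interval with $\hat{o}_j = 0$ is accepted only if $l_j > \tau$; hence $\alg \setminus \opt = L$ and $|\alg| = |\alg \cap \opt| + |L|$.

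The core of the argument is to charge the optimal value missed by the algorithm, $|\opt \setminus \alg|$, to the intervals of $L$. Take $I_i \in \opt \setminus \alg$; since $\hat{o}_i = 1$, the only reason $I_i$ was not accepted is a conflict with some earlier-accepted interval $I_j$ (so $r_j \le r_i$ and $I_i \cap I_j \neq \varnothing$). Crucially, $I_j$ cannot satisfy $\hat{o}_j = 1$: otherwise $I_i$ and $I_j$ would be two overlapping members of the optimal solution $\opt$, a contradiction. Hence $I_j \in L$. Choosing for each $I_i$ one such conflicting interval defines a map $f : \opt \setminus \alg \to L$. For a fixed $I_j \in L$, every $I_i$ with $f(I_i) = I_j$ overlaps $I_j$ and is released no earlier than $I_j$, so its release time lies in $[r_j, d_j)$; these intervals are pairwise disjoint (being a subset of $\opt$), so, ordering them by release time, all but the last are contained in the length-$l_j$ window $[r_j, d_j)$ and the last one overhangs $d_j$ by at most $k$. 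This yields $\sum_{f(I_i) = I_j} l_i \le l_j + k$.

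Finally, I will use the threshold to absorb the additive overhang: since $l_j > \tau$ for every $I_j \in L$, we have $l_j + k \le \left(1 + \frac{k}{\tau}\right) l_j$. Summing over $L$ gives $|\opt \setminus \alg| \le \left(1 + \frac{k}{\tau}\right) |L|$, and therefore
\[
|\opt| = |\alg \cap \opt| + |\opt \setminus \alg| \le |\alg \cap \opt| + \left(1 + \frac{k}{\tau}\right)|L| \le \left(1 + \frac{k}{\tau}\right)\left(|\alg \cap \opt| + |L|\right) = \left(1 + \frac{k}{\tau}\right)|\alg|,
\]
which is precisely $\left(1 + \frac{k}{\tau}\right)$-consistency. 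I expect the main obstacle to be the charging step of the second paragraph — specifically, arguing that a blocking interval must be a long, unpredicted interval (so that its length exceeds $\tau$) and that one such interval cannot block more than $l_j + k$ worth of optimal intervals; once these facts are established, the choice of $\tau$ makes the remaining arithmetic routine.
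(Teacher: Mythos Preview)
Your proof is correct and takes essentially the same approach as the paper: invoke Observation~\ref{obs:SemiFramework-cons-noSwitch}, then charge each missed optimal interval to the long greedily-accepted interval that blocked it, bounding the total charge per such interval by $l_j + k \le (1+\tfrac{k}{\tau})\,l_j$. Your explicit decomposition $\alg = (\alg \cap \opt) \cup L$ and the observation that any blocking interval must lie in $L$ (since two optimal intervals cannot overlap) are, if anything, a slightly more careful rendition of the paper's own class-based argument.
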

\begin{proof}
Assume the predictions $\predO$ are accurate. We aim to show that the total length of the optimal solution is at most $(1 + \frac{k}{\tau})$ times the total length of the solution produced by \textsc{SemiTrust-and-Switch}. By Observation~\ref{obs:SemiFramework-cons-noSwitch}, the \textsc{SemiTrust-and-Switch} algorithm never switches to the \secondPhase\ and remains in the \firstPhase\ throughout the instance.

Let us compare the output of \textsc{SemiTrust-and-Switch} with the optimal offline solution. Note that any interval with length less than $\tau$ accepted by \textsc{SemiTrust-and-Switch} must have been predicted, and by the accuracy of the predictions, all such intervals also appear in the optimal solution.

We now remove these common short intervals (those with length $< \tau$) from both solutions. What remains are only long intervals (with length $\geq \tau$) in the output of \textsc{SemiTrust-and-Switch}. For each such long interval $I_i$ accepted by the algorithm. Consider all intervals in the optimal solution that conflict with $I_i$, and group them into a class corresponding to $I_i$.

We now analyze one such class. Since $I$ was accepted by the algorithm, all intervals in the optimal solution that conflict with $I$ must start after $r_i$ and end before $d_i$.

Therefore, the total length of intervals in the optimal solution within any class is at most:
\[
k + l_i,
\]
while the contribution of \textsc{SemiTrust-and-Switch} in this class is $l_i$. Thus, the competitive ratio in this class is at most
\[
\frac{k + l_i}{l_i} = 1 + \frac{k}{l_i} \leq 1 + \frac{k}{\tau},
\]
since $l_i \ge \tau$.
\end{proof}

\begin{figure}[t]
        \centering
        \begin{tikzpicture}[scale=1]
            \node[anchor=east] at (0, 2) {$\opt$};
            \node[anchor=east] at (0, 1) {$\alg$};

            \foreach \y/\xstart/\xend in {
                1/5/7.5,
                2/5.4/6.4,
                2/6.8/9.2
            } {
                \drawInterval[MyGreen, thick]{densely dashed}{\y}{\xstart}{\xend};
            }

            \foreach \y/\xstart/\xend in {
                1/1/4,
                2/2.4/3.4,
                2/3.6/5.1
            } {
                \drawInterval[MyBlue, thick]{solid}{\y}{\xstart}{\xend};
            }

            \foreach \y/\xstart/\xend in {
                2/0.2/2
            } {
                \drawInterval[MyRed, thick]{densely dotted}{\y}{\xstart}{\xend};
            }

            % Blue intervals (solid)
            \node[fill=gray, circle, inner sep=1pt, name=vi] at (6.25, 1) {};    % midpoint of I_i
            \node[fill=gray, circle, inner sep=1pt, name=v4] at (5.9, 2) {};     % midpoint of I_4
            \node[fill=gray, circle, inner sep=1pt, name=v5] at (8.0, 2) {};     % midpoint of I_5
            
            % Red intervals (densely dashed)
            \node[fill=gray, circle, inner sep=1pt, name=vp1] at (2.5, 1) {};    % midpoint of I'_1
            \node[fill=gray, circle, inner sep=1pt, name=v2] at (2.9, 2) {};     % midpoint of I_2
            \node[fill=gray, circle, inner sep=1pt, name=v3] at (4.35, 2) {};    % midpoint of I_3
            
            % Green interval (densely dotted)
            \node[fill=gray, circle, inner sep=1pt, name=v1] at (1.1, 2) {};     % midpoint of I_1

            \draw[ultra thin, gray] (vi) -- (v4);
            \draw[ultra thin, gray] (vi) -- (v5);
            \draw[ultra thin, gray] (vi) -- (v3);

            \draw[ultra thin, gray] (vp1) -- (v3);
            \draw[ultra thin, gray] (vp1) -- (v3);
            \draw[ultra thin, gray] (vp1) -- (v2);
            \draw[ultra thin, gray] (vp1) -- (v1);
            
            \node[] at (2.5, 0.7) {$I'_1$};
            \node[] at (6.25, 0.7) {$I_i$};        

            \node[] at (1.1, 2.3) {$I_1$};        
            \node[] at (2.9, 2.3) {$I_2$};
            \node[] at (4.35, 2.3) {$I_3$};
            \node[] at (5.9, 2.3) {$I_4$};
            \node[] at (8, 2.3) {$I_5$};

            \draw[densely dotted, gray] (5, 0) -- (5, 3);
            \draw[densely dotted, gray] (7.5, 0) -- (7.5, 3);
            \draw[densely dotted, gray] (10, 0) -- (10, 3);
            \node[gray] at (5, 0.5) {$r_i$};
            \node[gray] at (7.5, 0.5) {$d_i$};
            \node[gray] at (10, 0.5) {$d_i + k$};

        \end{tikzpicture}
        \caption{The last connected component of the conflict graph $G(\opt, \alg)$ before switching. The styles of intervals represent their mapped counterparts. Note that $I_1$ is the only interval taken by \opt\ that is not mapped to any interval. All intervals $I_4$ and $I_5$, which are mapped to $I_i$, are contained within the range $[r_i, d_i + k]$.}
        \label{fig:intervals-general-robust-proof}
    \end{figure}
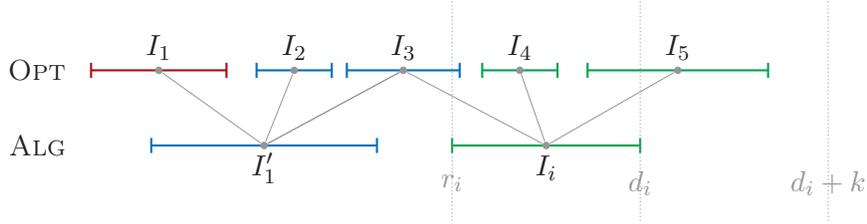

\begin{theorem}\label{Thm:SemiFramewok-robustness}
The \textsc{SemiTrust-and-Switch} framework $\alg$ satisfies $(1 + \frac{k}{\tau})$-consistency. Moreover, if the \secondPhase\ uses a $\theta$-competitive algorithm, then the framework satisfies
\[
|\opt| \le \max(\theta, \Delta + 1) \cdot |\alg| + \tau.
\]
\end{theorem}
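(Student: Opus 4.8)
The consistency claim is precisely Lemma~\ref{lem:SemiFramewok-cons}, so the plan is to concentrate on the robustness bound, mirroring the skeleton of the proof of Theorem~\ref{Thm:Framewok-robustness}. Let $t_j$ be the evaluation point at which the framework switches (set $t_j = +\infty$, so $\I_2 = \I_3 = \varnothing$, if it never switches), and partition the instance into $\I_1 := \I(\cdot, t_j^\leftarrow)$, $\I_2 := \I(t_j^\leftarrow, t_j^\rightarrow)$, and $\I_3 := \I(t_j^\rightarrow, \cdot)$. On $\I_3$ the framework runs a $\theta$-competitive algorithm on the suffix released after $t_j$, so $|\opt(\I_3)| \le \theta\,|\alg(\I_3)|$ exactly as before. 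Since both $\opt$ and $\alg$ split along the release-time threshold $t_j$, we have $|\opt| \le |\opt(\I_1 \cup \I_2)| + |\opt(\I_3)|$ and $|\alg| = |\alg(\I_1 \cup \I_2)| + |\alg(\I_3)|$, so it suffices to prove the \firstPhase\ bound
\[
|\opt(\I_1 \cup \I_2)| \le (\Delta + 1)\,|\alg(\I_1 \cup \I_2)| + \tau,
\]
which combined with the $\I_3$ estimate yields $|\opt| \le \max(\theta, \Delta + 1)\,|\alg| + \tau$.

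To prove the displayed \firstPhase\ bound I would reuse the charging map $f$ from Lemma~\ref{lemma:greedyub}, but against $\alg$'s \firstPhase\ solution rather than \greedy: send each $I \in \opt(\I_1 \cup \I_2)$ to the unique interval of $\alg$'s \firstPhase\ selection (if one exists) that overlaps $I$ and has release time $\le r_I$. The two structural facts I would establish are: (i) every interval of $\opt$ of length $> \tau$ is charged, because when such an interval arrives rule (ii) of the \firstPhase\ forces $\alg$ to accept it greedily unless it conflicts with an already-accepted interval of no-larger release time---so every \emph{uncharged} $\opt$ interval is short, of length $\le \tau$; and (ii) for any fixed \firstPhase\ interval $I_i$ of $\alg$, the $\opt$ intervals charged to it are pairwise disjoint and all have release time in $[r_i, d_i)$, hence all but at most one lie inside $[r_i, d_i]$ and the last may protrude by at most $k$, giving total charged length $< l_i + k \le (\Delta + 1)\,l_i$ since $l_i \ge k/\Delta$. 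Summing (ii) over $\alg$'s \firstPhase\ intervals bounds the charged part of $\opt(\I_1 \cup \I_2)$ by $(\Delta + 1)\,|\alg(\I_1 \cup \I_2)|$.

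It then remains to bound the total length of the uncharged $\opt$ intervals by $\tau$, and by (i) it is enough to show that at most one of them survives in $\I_1 \cup \I_2$. An uncharged interval is a short $\opt$ interval that $\alg$ never accepted and that overlaps no \firstPhase\ interval of $\alg$; in particular the prediction must have rejected it (otherwise $\alg$ would have accepted it in the \firstPhase, or it would conflict with an earlier accepted interval, making it charged). The plan here is to invoke the non-switch invariant $|\opt(\I(r_{j'}^\leftarrow, t_{j'}^\leftarrow))| \le |\pred(\I(r_{j'}^\leftarrow, t_{j'}^\leftarrow))|$ that holds at every evaluation point $t_{j'}$ strictly before the switch: an uncharged interval whose deadline falls at or before such a $t_{j'}$ could be adjoined to an optimal prefix solution and would push the prefix optimum strictly above $\pred$'s prefix value, contradicting the fact that no switch occurred at $t_{j'}$. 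Hence all uncharged intervals except possibly the last one are ruled out, and the remaining one contributes at most $\tau$; this is exactly the situation depicted in the last connected component of Figure~\ref{fig:intervals-general-robust-proof}.

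I expect this last step to be the main obstacle: making the ``adjoin the uncharged interval to the prefix optimum'' argument precise requires carefully comparing the offline optimum of each evaluated prefix with $\pred$'s value on it while accounting for the predicted intervals that $\alg$ failed to accept in the \firstPhase\ because a greedily-taken long interval already blocked them---these are exactly the intervals that could overlap an uncharged $\opt$ interval and thereby obstruct the naive feasibility comparison. Everything else---the $\I_3$ part and the charging bound (ii)---is routine bookkeeping once the map $f$ is set up.
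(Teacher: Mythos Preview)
Your charging steps (i) and (ii) and the $\I_3$ bound match the paper. The gap is precisely where you flag it: your plan to rule out all but one uncharged $\opt$ interval by ``adjoining it to the prefix optimum'' does not go through, for the reason you yourself identify --- a predicted interval that $\alg$ failed to accept (because a greedily-taken long interval blocked it) can overlap the uncharged interval, so you cannot feasibly append it to $\pred$'s prefix solution, and the comparison with $|\pred(\text{prefix})|$ breaks down. There is no clean way to rescue this one-interval-at-a-time argument.

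The paper resolves this differently, and in particular does \emph{not} mirror the $\I_1,\I_2,\I_3$ decomposition of Theorem~\ref{Thm:Framewok-robustness}. Instead it analyzes the connected components of the conflict graph $G(\opt,\alg)$. Within any single component, your charging map leaves at most one $\opt$ interval unassigned --- the one whose release time precedes every $\alg$ interval of that component --- and this is pure graph topology, not a consequence of the invariant. The non-switch invariant is then invoked only at the coarser per-component level: every component that closes strictly before the switch satisfies $|\alg|\ge|\opt|$ on that component, since otherwise the prefix check would have fired at the next evaluation point. Hence the earlier components carry no additive loss at all, and only the single unassigned interval in the \emph{last} component --- necessarily of length at most $\tau$, since the \firstPhase\ rejected it --- supplies the additive $\tau$. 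This component decomposition is the organizing idea your proposal is missing; it replaces the fragile interval-versus-$\pred$ comparison with a whole-component comparison where the ``at most one leftover'' is automatic.
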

\begin{proof}
    Let $A$ and $B$ be two feasible sets of intervals. Define conflict graph $G(A, B)$ as an undirected graph with $|A| + |B|$ nodes, where each node corresponds to an interval from either $A$ or $B$. If an interval appears in both sets, it is represented by two distinct nodes, one for each set. An edge is placed between two nodes if their corresponding intervals intersect.

    Now consider the graph $G(\opt, \alg)$. It may consist of several connected components. For all but the last component (i.e., before the point where the framework switches), we must have $|\alg| \geq |\opt|$; otherwise, the framework would have detected suboptimality and switched earlier.
    
    We now analyze the final component. We aim to show that within this component, we have:
    \[
    |\opt| \le (\Delta + 1) \cdot |\alg| + \tau.
    \]
    
    For each interval $I_i$ in $\alg$, we assign to $I_i$ all intervals in $\opt$ whose release times lie in the interval $\left(r_i, d_i\right]$. Since this forms a connected component, there is at most one interval in $\opt$ that is not assigned to any interval in $\alg$. Moreover, since the algorithm did not accept this interval, its length must be less than $\tau$. Figure~\ref{fig:intervals-general-robust-proof} presents a visual summary.
    
    Now consider an arbitrary interval $I_i \in \alg$ within this component. The total length of the intervals in $\opt$ assigned to $I_i$ cannot exceed $l_i + k$. This is because all such intervals must fit within the window $\left(r_i, d_i\right]$ and cannot overlap with $I_i$, which has length $l_i$. Therefore,
    \[
    l_i + k \le (\Delta + 1) \cdot l_i,
    \]
    where we use the fact that $l_i \ge \frac{k}{\Delta}$.
    
    Summing over all intervals in $\alg$ and adding the at most $\tau$ contribution from the one unassigned interval in $\opt$, we obtain:
    \[
    |\opt| \le (\Delta + 1) \cdot |\alg| + \tau.
    \]
    For the second part of the instance (after the switch), the algorithm switches and ran a $\theta$-competitive algorithm in the \secondPhase. .

    Combining both parts, we get:
    \[
    |\opt| \le \max(\theta, \Delta + 1) \cdot |\alg| + \tau.
    \]
    
    This completes the proof.

\end{proof}

\subsection{Lower Bound}

In this subsection, we show a lower bound on the robustness of algorithms with consistency better than $\Delta$. Even with access to full information through $\predI$, no algorithm with consistency strictly better than $\Delta$ can guarantee a robustness bound better than $|\opt| \le \Delta \cdot |\alg| + \frac{k}{\Delta}.$

\begin{lemma}\label{lem:Semi-lowerbound}
Given two-value instances, no algorithm with robustness better than $\Delta \cdot |\alg| + \frac{k}{\Delta}$ can achieve consistency better than $\Delta$, even with full predictions $\predI$.
\end{lemma}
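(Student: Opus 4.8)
The plan is to re-use the adversarial template of Lemma~\ref{lem:LB-2value}, but to recalibrate the gadget so that a single premature acceptance already drives the consistency ratio all the way up to $\Delta$ (rather than only to $1+\frac{\lceil\Delta\rceil-1}{\Delta}$), at the cost of a weaker additive robustness loss of $k/\Delta$ instead of $k$. Fix a deterministic algorithm $\alg'$; the two-value instance uses long intervals of length $k$ and short intervals of length $k/\Delta$. I would hand $\alg'$ the full prediction $\predI$ describing the two-interval instance $\{I_1,I_2\}$ with $I_1=[0,\tfrac{k}{\Delta})$ (short) and $I_2=[\delta,\delta+k)$ (long) for some $0<\delta<\tfrac{k}{\Delta}$, so that $I_1$ and $I_2$ overlap and the offline optimum of the predicted instance is exactly $\{I_2\}$. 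The adversary releases $I_1$ first; since $\alg'$'s decision on $I_1$ depends only on $I_1$ and $\predI$, it is the same in the two continuations below, and the adversary picks whichever continuation hurts $\alg'$.

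If $\alg'$ accepts $I_1$, the adversary lets the true instance be exactly $\predI$, so the prediction is accurate. Then $\alg'$ can no longer take $I_2$ and finishes with total length $k/\Delta$, while $|\opt|=k$; this is a consistency ratio of exactly $\Delta$. Hence any $\alg'$ with consistency strictly better than $\Delta$ must reject $I_1$, and we are forced into the second continuation.

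If $\alg'$ rejects $I_1$, the adversary deviates from $\predI$ and appends, strictly to the right of $I_1$ (so nothing it releases conflicts with $I_1$), the standard adaptive two-value gadget: release short intervals one by one, each placed after the deadline of the previous one as long as $\alg'$ keeps rejecting; as soon as $\alg'$ accepts a short, make every subsequent interval — including one long interval — overlap that accepted short; and if $\alg'$ has rejected $\lceil\Delta\rceil^2$ shorts, release a long interval. A short case analysis, exactly as in Case~2 of Lemma~\ref{lem:LB-2value}, shows that on this gadget $\alg'$ obtains at most $k/\Delta$ while $\opt$ obtains at least $k$ (whenever $\alg'$ ever accepts a short), or $\alg'$ obtains at most $k$ while $\opt$ obtains at least $\lceil\Delta\rceil^2\cdot\tfrac{k}{\Delta}\ge \Delta k$ (when $\alg'$ accepts no short); in both cases $\opt$'s value on the gadget is at least $\Delta$ times $\alg'$'s. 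Since $\opt$ can additionally take $I_1$, which is disjoint from the whole gadget and which $\alg'$ rejected, this yields $|\opt|\ge \Delta\cdot|\alg'|+\tfrac{k}{\Delta}$, contradicting robustness better than $\Delta\cdot|\alg'|+\tfrac{k}{\Delta}$.

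The main thing to get right is the geometry of the gadget: the $\lceil\Delta\rceil^2$ pairwise-disjoint shorts already span a total length exceeding $k$, so a single long interval cannot cover all of them — which is precisely why, in the branch where $\alg'$ rejects every short, we must charge $\opt$ the aggregate length of the shorts (at least $\Delta k$) rather than letting $\opt$ play the long interval there. The only other points requiring care are that the two continuations are genuinely indistinguishable at the fork (ensured by releasing $I_1$ before anything else), and, if one wants the bound to hold with the number of intervals fixed in advance, padding the ``accept'' branch with dummy intervals overlapping $I_1$ as in Lemma~\ref{lem:LB-2value}; neither affects the estimates above.
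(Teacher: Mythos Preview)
Your proof is correct and follows essentially the same approach as the paper: release a short interval first, argue that accepting it forces consistency ratio $\Delta$ when the prediction is made accurate, and that rejecting it allows the adaptive two-value gadget (identical to the one in Lemma~\ref{lem:LB-2value}) to drive the competitive ratio to $\Delta$ while the lost short contributes the additive $k/\Delta$. The only notable difference is that you use a two-interval predicted instance $\{I_1,I_2\}$, whereas the paper pads $\predI$ with $\Delta^2$ extra mutually overlapping shorts and a second long so that the predicted and adversarial instances have matching size; your remark about padding the accept branch addresses exactly this point, and your simpler $\predI$ actually makes the Case~1 accounting cleaner.
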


\begin{proof}
We want to show that for any deterministic algorithm $\alg$ that guarantees robustness of 
\[
|\opt(\I)| \leq \Delta \cdot |\alg(\I)| + \frac{k}{\Delta} - \epsilon
\]
for all instances $\I$, there exists an adversarial instance $\I_{\text{Adv}}$ on which $\alg$ has a competitive ratio of at least $\Delta$.

\paragraph{Predicted Instance $\predI$.} 
Consider a predicted instance $\predI$ with $\Delta^2 + 2$ intervals. The first interval is short, followed by a long interval that overlaps it. Then, $\Delta^2$ short intervals arrive, all overlapping with each other and ending before the deadline of the initial long interval. Finally, a second long interval arrives after all others. This instance is illustrated in Figure~\ref{fig:bad-instance}.

Now, consider the scenario where $\predI$ is given to the algorithm $\alg$. We construct an adversarial instance $\Iadversary$ based on the algorithm's decision.

\paragraph{Adversarial Instance.} 
The first interval in $\Iadversary$ is a short interval, identical to the first interval in $\predI$. We distinguish two cases based on the algorithm's decision for this interval.

\begin{itemize}
    \item \textbf{Case 1: The algorithm $\alg$ accepts the first short interval.} \\
    In this case, we set $\Iadversary = \predI$. The optimal solution has total length $k$, i.e., $|\opt(\Iadversary)| = k$, while the algorithm selects only the short interval, so $|\alg(\Iadversary)| = \frac{k}{\Delta}$. Since the prediction is accurate, this case witnesses a consistency ratio of $\Delta$.

    \item \textbf{Case 2: The algorithm $\alg$ rejects the first short interval.} \\
    In this case, the remaining part of the instance consists of $\lceil \Delta \rceil^2 + 1$ intervals that arrive sequentially. The first $\lceil \Delta \rceil^2$ intervals are short, and the final one is a long interval. The exact release times are defined adaptively based on the decisions made by $\alg$. More specifically, for each interval $I_i$, if $\alg$ does not select $I_i$, then $I_{i+1}$ is released after the deadline of $I_i$. Otherwise, $I_{i+1}$ and all subsequent intervals are released so that they overlap with $I_i$.
    
    Figure~\ref{fig:general-adversary-instance} depicts the two possible evolutions of $\Iadversary$ under this construction. The algorithm $\alg$ cannot achieve a competitive ratio better than $\Delta$ on the remaining part of the instance, and additionally incurs a loss of $\frac{k}{\Delta}$ due to rejecting the first short interval.
    
    The behavior of both $\opt$ and $\alg$ is illustrated in Figure~\ref{fig:theorem2-adversary-instance}.

\end{itemize}

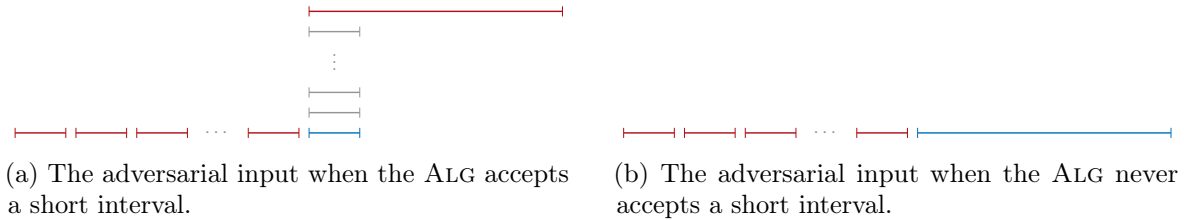
\begin{figure}[!hbt]
    \centering
    \begin{subfigure}{0.48\textwidth}
    \resizebox{\textwidth}{!}{
    \begin{tikzpicture}
        % Define intervals as {start, end, y}
        \foreach \xstart/\xend/\y in {
            0/4/5,
            0/5.2/6.2,
            0/6.4/7.4,
            0/8.6/9.6,
            2.4/9.8/14.8
        } {
            \drawInterval[MyRed]{solid}{\xstart}{\xend}{\y};
        }
        \foreach \xstart/\xend/\y in {
            0/9.8/10.8
        } {
            \drawInterval[MyBlue]{solid}{\xstart}{\xend}{\y};
        }
        \foreach \xstart/\xend/\y in {
            0.4/9.8/10.8,
            0.8/9.8/10.8,
            2/9.8/10.8
        } {
            \drawInterval[gray]{solid}{\xstart}{\xend}{\y};
        }

        \node[gray] at (8, 0) {$\cdots$};
        \node[gray] at (10.3, 1.5) {$\vdots$};
    
    \end{tikzpicture}
    }
    \caption{The adversarial input when the \alg\ accepts a short interval.}
    \label{fig:theorem2-adversary-instance-case1}
    \end{subfigure}%
    \hfill
    \begin{subfigure}{0.48\textwidth}
        \resizebox{\textwidth}{!}{
        \begin{tikzpicture}
            % Define intervals as {start, end, y}
            \foreach \xstart/\xend/\y in {
                0/4/5,
                0/5.2/6.2,
                0/6.4/7.4,
                0/8.6/9.6
            } {
                \drawInterval[MyRed]{solid}{\xstart}{\xend}{\y};
            }
            \foreach \xstart/\xend/\y in {
                0/9.8/14.8
            } {
                \drawInterval[MyBlue]{solid}{\xstart}{\xend}{\y};
            }
    
            \node[gray] at (8, 0) {$\cdots$};        
        \end{tikzpicture}
        }
        \caption{The adversarial input when the \alg\ never accepts a short interval.}
        \label{fig:theorem2-adversary-instance-case2}
    \end{subfigure}
    \caption{The two cases of the adversarial input. In both figures, the red intervals represent the intervals in the optimal solution. In case~\ref{fig:adversary-instance-case1}, the blue interval appears only in the \alg, whereas in case~\ref{fig:adversary-instance-case2}, it is included in both the optimal solution and the \alg.}
    \label{fig:theorem2-adversary-instance}
\end{figure}

\end{proof}

\section{Randomized Setting}
In this section, we study randomized algorithms for online interval scheduling with predictions. Our goal is to understand how randomness can improve performance guarantees. We first show that if we plug a randomized algorithm into the \secondPhase\ of the \textsc{Trust-and-Switch} framework, we obtain meaningful consistency–robustness trade-offs. Our main result is the \textsc{SmoothMerge} algorithm, whose performance degrades gracefully with the quality of the prediction. The key idea is to combine two algorithms, one that uses predictions and another that is prediction-agnostic, in a smooth and controlled manner.

\subsection{The \textsc{Trust-and-Switch} Framework}
The \textsc{Trust-and-Switch} framework naturally extends to this setting by allowing a randomized algorithm to be used in the \secondPhase.

\virtualalgorithm, introduced by~\cite{Lipton1994OnlineIS}, achieves a competitive ratio of $2$ on two-value instances. Briefly, this algorithm accepts any available interval of length $k$ whenever possible. For intervals of length $1$, the algorithm flips a fair coin to decide whether to accept the interval immediately or to \emph{virtually} accept it—that is, to defer accepting the short interval until the end of its duration.

Therefore, using Lemmas~\ref{lem:Framewok-1-cons} and~\ref{lem:TrustFramewok-2value-robustness}, we obtain the following proposition.

\begin{proposition}
Given a two-value instance and predictions $\predO$, \textsc{Trust-and-Switch}(\virtualalgorithm) satisfies $1$-consistency. Moreover, it guarantees
\[
|\opt| \le 2 \cdot |\textsc{Trust-and-Switch}(\virtualalgorithm)| + k,
\]
independent of the quality of the predictions.
\end{proposition}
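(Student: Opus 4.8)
The plan is to obtain this proposition as a direct corollary of the framework-level guarantees already established, namely Lemma~\ref{lem:Framewok-1-cons} and Lemma~\ref{lem:TrustFramewok-2value-robustness}, together with the known competitiveness of \virtualalgorithm. For the $1$-consistency claim I would simply invoke Lemma~\ref{lem:Framewok-1-cons}: its proof shows that whenever $\predO$ is accurate the framework never leaves the \firstPhase, and that argument does not depend on which classic algorithm is designated for the \secondPhase. Hence on accurate instances \textsc{Trust-and-Switch}(\virtualalgorithm) behaves exactly like \pred, which the internal replacement argument of Lemma~\ref{lem:Framewok-1-cons} shows is optimal, giving $1$-consistency.

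For the additive robustness bound I would instantiate Lemma~\ref{lem:TrustFramewok-2value-robustness} with $\theta = 2$, using the fact (due to \cite{Lipton1994OnlineIS}) that \virtualalgorithm\ is $2$-competitive on two-value instances. Plugging $\theta = 2$ into the lemma's conclusion $|\opt| \le \max\{\theta, 2\} \cdot |\alg| + k$ immediately yields $|\opt| \le 2 \cdot |\alg| + k$.

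The one point requiring care — and the main, though minor, obstacle — is that \virtualalgorithm\ is a \emph{randomized} algorithm, whereas Lemma~\ref{lem:TrustFramewok-2value-robustness} is phrased for a generic classic algorithm without explicit attention to randomization. I would handle this by observing that all randomness in \textsc{Trust-and-Switch}(\virtualalgorithm) is confined to the \secondPhase: the decision of whether and when to switch, and every acceptance made in the \firstPhase, is a deterministic function of the (fixed) instance and prediction, so the evaluation point $t_j$ and the partition into $\I_1, \I_2, \I_3$ are deterministic. Therefore the bound $|\opt(\I_1 \cup \I_2)| \le 2 \cdot |\alg(\I_1 \cup \I_2)| + k$ from the proof of Lemma~\ref{lem:TrustFramewok-2value-robustness} holds deterministically, while on the suffix $\I_3$ the competitiveness of \virtualalgorithm\ against an oblivious adversary (which is the relevant model here, since the adversarial instance may depend on $\predO$ but not on \virtualalgorithm's coins) gives $|\opt(\I_3)| \le 2 \cdot \mathbb{E}[|\alg(\I_3)|]$. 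Using $\mathbb{E}[|\alg|] = |\alg(\I_1 \cup \I_2)| + \mathbb{E}[|\alg(\I_3)|]$ by linearity of expectation and combining the two bounds yields $|\opt| \le 2 \cdot \mathbb{E}[|\alg|] + k$, independent of the prediction's quality.

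Finally, I would note for completeness that the $1$-consistency argument is itself unaffected by the randomization, since under accurate predictions the \secondPhase\ is never reached and no coin is ever flipped. This closes the proof with essentially no new work beyond bookkeeping the deterministic/randomized split.
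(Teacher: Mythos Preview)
Your proposal is correct and matches the paper's approach exactly: the paper derives this proposition as an immediate corollary of Lemma~\ref{lem:Framewok-1-cons} and Lemma~\ref{lem:TrustFramewok-2value-robustness} together with the $2$-competitiveness of \virtualalgorithm\ from \cite{Lipton1994OnlineIS}, without any additional argument. Your explicit treatment of the deterministic/randomized split is more careful than the paper, which simply asserts the result and leaves the expectation bookkeeping implicit.
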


For general instances,~\cite{Lipton1994OnlineIS} introduced a randomized \textsc{Marriage Algorithm} that achieves a competitive ratio of $O((\log \Delta)^{1+\epsilon})$, where %$\Delta$ is the ratio of the longest to the shortest interval, and 
$\epsilon > 0$ is an arbitrarily small constant.
Therefore, using Theorem~\ref{Thm:Framewok-robustness}, we obtain the following proposition.

\begin{proposition}
Given predictions $\predO$, \textsc{Trust-and-Switch}(\textsc{Marriage Algorithm}) satisfies $1$-consistency. Moreover, it guarantees
\[
|\opt| \le \max\left\{2,\ O\left((\log \Delta)^{1+\epsilon}\right)\right\} \cdot |\alg| + 2k,
\]
where $\alg$ denotes \textsc{Trust-and-Switch}(\textsc{Marriage Algorithm}), independent of the quality of the predictions.
\end{proposition}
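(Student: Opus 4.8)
The plan is to treat this statement as a direct instantiation of the two general results already established for the \textsc{Trust-and-Switch} framework, combined with the known competitive guarantee of the \textsc{Marriage Algorithm} of~\cite{Lipton1994OnlineIS}. Concretely, $1$-consistency will come from Lemma~\ref{lem:Framewok-1-cons}, the robustness bound from Theorem~\ref{Thm:Framewok-robustness}, and the value of $\theta$ from the cited $O((\log\Delta)^{1+\epsilon})$-competitiveness of the \textsc{Marriage Algorithm}.

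First, for $1$-consistency: the proof of Lemma~\ref{lem:Framewok-1-cons} shows that the framework never switches to the \secondPhase\ when $\predO$ is accurate, and that argument does not depend at all on which classic algorithm is plugged into the \secondPhase. Hence, on every instance for which the prediction is accurate, \textsc{Trust-and-Switch}(\textsc{Marriage Algorithm}) stays in the \firstPhase, coincides with $\pred$, and therefore (again by the argument in Lemma~\ref{lem:Framewok-1-cons}) matches $\opt$; this gives $1$-consistency. Since the framework admits a randomized \secondPhase, all of this is read in expectation, which is exactly the randomized notion of consistency from the preliminaries.

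Second, for robustness: \cite{Lipton1994OnlineIS} establish that the \textsc{Marriage Algorithm} is $O((\log\Delta)^{1+\epsilon})$-competitive on general instances, and in particular its competitive ratio is at least $1$. We may therefore take $\theta := \max\{2,\, O((\log\Delta)^{1+\epsilon})\}$, which is simultaneously a valid upper bound on the competitive ratio of the \secondPhase\ algorithm and strictly larger than $1$ — precisely the hypothesis $\theta > 1$ invoked at the end of the proof of Theorem~\ref{Thm:Framewok-robustness}. Substituting this $\theta$ into Theorem~\ref{Thm:Framewok-robustness} (whose partition-into-three-classes argument and all its inequalities hold verbatim for a randomized \secondPhase\ when interpreted in expectation) yields $|\opt| \le \theta \cdot |\alg| + 2k$, i.e. the claimed bound with $\alg = \textsc{Trust-and-Switch}(\textsc{Marriage Algorithm})$.

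I do not expect a genuine obstacle, as this is a corollary of the earlier results; the only points deserving care are (i) confirming that the cited $O((\log\Delta)^{1+\epsilon})$ competitive ratio of the \textsc{Marriage Algorithm} is stated for exactly the randomized model and proportional-weight objective used here, and (ii) the cosmetic role of the constant $2$ in the maximum — it is included so that $\theta > 1$ holds for all $\Delta$ (including regimes where $(\log\Delta)^{1+\epsilon}$ would otherwise be below $1$), keeping us inside the hypotheses of Theorem~\ref{Thm:Framewok-robustness} and matching the two-value bound of Lemma~\ref{lem:TrustFramewok-2value-robustness}.
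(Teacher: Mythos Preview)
Your proposal is correct and matches the paper's own treatment: the paper states this proposition immediately after recalling the $O((\log\Delta)^{1+\epsilon})$ competitive ratio of the \textsc{Marriage Algorithm} and simply says ``Therefore, using Theorem~\ref{Thm:Framewok-robustness}, we obtain the following proposition,'' with no further argument. Your derivation via Lemma~\ref{lem:Framewok-1-cons} for $1$-consistency and Theorem~\ref{Thm:Framewok-robustness} for robustness is exactly this, and your remarks about the randomized interpretation and the role of the constant $2$ are reasonable elaborations that the paper itself leaves implicit.
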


\subsection{A Smooth Algorithm}

In this section, we introduce the \SmoothMerge\ algorithm, which achieves smoothness while maintaining robustness. The key insight is to combine two complementary algorithms: one that follows the prediction and performs well when the predictions are good (i.e., low error), and another classic algorithm that maintains reasonable performance regardless of prediction quality. Specifically, we design a randomized algorithm by merging the \trust\ and \greedy\ algorithms.

\paragraph{\SmoothMerge.} 
Upon the arrival of each interval, the algorithm proceeds as follows: if the interval conflicts with any previously accepted interval, it is immediately rejected. Otherwise, if both simulated strategies agree on accepting (or rejecting) the interval, the algorithm follows that unanimous decision. When the strategies disagree, the algorithm accepts the interval with probabilities $p_t$ and $p_g$, corresponding to the \trust\ and \greedy\ strategies, respectively. The full procedure is given in Algorithm~\ref{algorithm:smoothmerge}.

\begin{algorithm}[t]
\caption{\SmoothMerge}
\label{algorithm:smoothmerge}
\begin{algorithmic}[1]
\REQUIRE Interval set $\I$, Prediction $\predO$, probabilities $p_t, p_g$
\ENSURE Selected interval set $S$

\STATE $S \leftarrow \emptyset$

\FOR{each interval $I$ arriving in online order}
    \STATE $A_{\trust} \leftarrow$ \textbf{true} if \trust\ accepts $I$
    \STATE $A_{\greedy} \leftarrow$ \textbf{true} if \greedy\ accepts $I$ 

    \IF{$I$ conflicts with any interval in $S$}
        \STATE \textbf{reject} $I$ 
    \ELSIF{$A_{\trust} = \text{true}$ \textbf{and} $A_{\greedy} = \text{true}$}
        \STATE $S \leftarrow S \cup \{I\}$
    \ELSIF{$A_{\trust} = \text{true}$ \textbf{and} $A_{\greedy} = \text{false}$}
        \STATE $S \leftarrow S \cup \{I\}$ with \textbf{probability} $p_t$ 
    \ELSIF{$A_{\trust} = \text{false}$ \textbf{and} $A_{\greedy} = \text{true}$}
        \STATE $S \leftarrow S \cup \{I\}$ with \textbf{probability} $p_g$ 
    \ELSE
        \STATE \textbf{reject} $I$
    \ENDIF
\ENDFOR
\RETURN $S$
\end{algorithmic}
\end{algorithm}

To analyze the performance of our algorithm based on the quality of the predictions, we need to formally define prediction error.

\begin{definition}
\label{def:prediction-error}
For a given instance $\I$ and prediction $\predO$, the \emph{prediction error} $\eta$ is defined as:
\begin{align*}
\eta(\I, \predO) = \frac{|\opt(\I)| - |\trust(\I, \predO)|}{|\opt(\I)|}.
\end{align*}
where $|\opt(\I)|$ is the value of the optimal solution and $|\trust(\I, \predO)|$ is the value obtained by the \trust\ algorithm following prediction $\predO$.
\end{definition}

The prediction error $\eta \in [0, 1]$ quantifies the reliability of predictions: $\eta = 0$ indicates perfect predictions (where \trust\ achieves optimal performance), while larger values indicate less reliable predictions. When $\eta = 1$, the prediction-based algorithm performs arbitrarily poorly compared to the optimum.

\begin{theorem}
\label{thm:smooth-merging}
\SmoothMerge\ achieves both smoothness and robustness guarantees. Specifically,
\begin{align*}
\mathbb{E}[|\alg|] \geq 
\max\Big\{& |\opt| \cdot (1 - \eta) \cdot p_t \cdot (1 - p_g), \\
          & \frac{p_g - p_t p_g}{1 - p_t p_g} \cdot |\greedy| \Big\},
\end{align*}
where $p_t$ and $p_g$ are the chosen probabilities for the \trust\ and \greedy\ algorithms, respectively.
\end{theorem}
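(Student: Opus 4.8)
\textbf{Proof proposal for Theorem~\ref{thm:smooth-merging}.}

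The plan is to establish the two lower bounds in the maximum separately, since each targets a different regime (accurate predictions versus adversarial inputs). For the first bound, I would track, interval by interval, the contribution to $\mathbb{E}[|\alg|]$ coming from intervals that \trust\ accepts. Fix an interval $I$ that \trust\ accepts and that is not in conflict with previously \alg-accepted intervals. If \greedy\ also accepts $I$, then \alg\ accepts it deterministically; if \greedy\ rejects it, \alg\ accepts it with probability $p_t$. So in either case $I$ is accepted with probability at least $p_t$. The subtlety is the conditioning on ``not in conflict with $S$'': I want to argue that the set of \trust-accepted intervals that \emph{survive} to be offered to \alg\ (i.e., do not conflict with something \alg\ already took) has, in expectation, total length at least $(1-p_g)\cdot|\trust|$ — the idea being that the only way a \trust-interval $I'$ gets blocked in \alg\ is if \alg\ previously accepted some interval that \greedy\ took but \trust\ did not, which happens with probability $p_g$ per such disagreement, and each \trust-interval is blocked by at most one such ``rogue'' greedy acceptance. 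Combining, each unit of $|\trust|$ contributes at least $p_t(1-p_g)$ to $\mathbb{E}[|\alg|]$, giving $\mathbb{E}[|\alg|] \ge p_t(1-p_g)\cdot|\trust| = p_t(1-p_g)(1-\eta)|\opt|$ by Definition~\ref{def:prediction-error}.

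For the second (robustness) bound, I would symmetrically track the \greedy-accepted intervals. A \greedy-interval $I$ offered to \alg\ (not conflicting with $S$) is accepted with probability $p_g$ if \trust\ disagrees, and with probability $1$ if \trust\ agrees; so probability at least $p_g$ in all cases. Again the issue is survival: a \greedy-interval can be blocked in $S$ by an earlier \alg-acceptance that \trust\ wanted but \greedy\ did not (probability $p_t$ per disagreement). Setting up a recursive/amortized accounting — letting $x$ denote the expected fraction of $|\greedy|$ that \alg\ captures — one gets a self-referential bound of roughly $x \ge p_g(1 - p_t x)$, equivalently $x \ge \frac{p_g}{1+p_t p_g}$, or after the cleaner bookkeeping that the intended statement suggests, $x \ge \frac{p_g - p_t p_g}{1 - p_t p_g}$. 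I would make this rigorous by ordering the disagreement-intervals by arrival and showing that the probability a given \greedy-interval is available telescopes against the probability mass already committed to \trust-only intervals.

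The main obstacle, and the step I would spend the most care on, is handling the \emph{dependence} between the coin flips and the conflict structure — the event ``$I$ does not conflict with $S$'' is not independent of the randomness used at $I$, but it \emph{is} a function only of coins flipped strictly before $I$ arrives, so I would condition on the history $\mathcal{F}_{<I}$ (the filtration generated by all decisions before $I$) and argue the per-interval acceptance probability bound holds conditionally, then take expectations. A second delicate point is ensuring that the ``each \trust-interval is blocked by at most one rogue greedy acceptance'' claim is correct: this needs the observation that intervals accepted by \greedy\ but not \trust, restricted to those that actually conflict with a fixed \trust-interval $I'$, all contain a common point (since they overlap $I'$), hence \alg\ — which maintains feasibility — accepts at most one of them, so the blocking probability is at most $p_g$ and not a union over many. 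I would also need to confirm $\eta \le 1$ so that $(1-\eta)|\opt| = |\trust|$ is nonnegative, which is immediate from Definition~\ref{def:prediction-error}. The remaining manipulations (substituting the error definition, simplifying the algebraic fraction) are routine.
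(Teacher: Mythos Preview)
Your overall strategy---lower-bounding the acceptance probability of each \trust\ interval by $p_t(1-p_g)$ and of each \greedy\ interval by $\tfrac{p_g(1-p_t)}{1-p_tp_g}$, then summing---is exactly the paper's approach (the paper phrases it as a depth recursion on connected components of the conflict graph $G(\greedy,\trust)$). Two steps in your writeup need repair, however.

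First, your justification for ``at most one blocker'' is wrong as stated. You claim that all \greedy-only intervals conflicting with a fixed \trust-interval $I'$ contain a common point ``since they overlap $I'$''; this is false (take $I'=[0,10]$ with \greedy\ intervals $[-1,1]$ and $[9,11]$). The correct argument is that only intervals \emph{released before} $I'$ can block it in $S$, and any such interval overlapping $I'$ must contain the point $r_{I'}$; since \greedy's output is feasible, at most one of its intervals contains $r_{I'}$. The same reasoning, with the roles of \trust\ and \greedy\ swapped, shows each \greedy\ interval also has at most one potential blocker---this is what makes the whole recursion well-defined.

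Second, your robustness recursion is set up backwards. If $x$ is a lower bound on the acceptance probability of every \emph{earlier} \greedy\ interval, then the \trust-interval $T_j$ that could block $G_i$ is itself blocked with probability at least $x$, so $\mathbb{P}[T_j\in S]\le p_t(1-x)$ and hence $\mathbb{P}[G_i\in S]\ge p_g\bigl(1-p_t(1-x)\bigr)$. Your recursion $x\ge p_g(1-p_tx)$ instead asserts $\mathbb{P}[T_j\in S]\le p_t x$, which goes the wrong way and yields the too-strong (and therefore invalid) bound $\tfrac{p_g}{1+p_tp_g}$---for $p_t=p_g=\tfrac12$ this gives $0.4$, whereas the true limiting acceptance probability is $\tfrac13$. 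Correcting the recursion to $x\ge p_g\bigl(1-p_t(1-x)\bigr)$ and solving gives precisely $x^*=\tfrac{p_g-p_tp_g}{1-p_tp_g}$; an induction on arrival order with base case $\mathbb{P}[G_0\in S]=p_g\ge x^*$ then makes the argument rigorous and matches the paper's computation.
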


\begin{proof}
We start by analyzing the acceptance probabilities for intervals in both $\greedy(\I)$ and $\trust(\I)$.
For any interval $I_i \in \I$, we compute the probability that Algorithm~\ref{algorithm:smoothmerge} accepts $I_i$. The key insight is that intervals belonging to both $\greedy(\I)$ and $\trust(\I)$ are accepted with probability 1, as they cannot conflict with previously selected intervals.

Let conflict graph $G(\greedy(\I), \trust(\I))$ be an undirected graph with $|\greedy(\I)| + |\trust(\I)|$ nodes, where each node corresponds to an interval from either $\greedy(\I)$ or $\trust(\I)$. If an interval appears in both sets, it is represented by two distinct nodes, one for
each set. An edge is placed between two nodes if their corresponding intervals intersect.

We analyze intervals of each connected components of $G(\greedy(\I), \trust(\I))$ separately, as they are independent of each other. Consider a connected component with $n$ intervals from $\greedy(\I)$ (denoted $I_{G_0}, I_{G_1}, \ldots, I_{G_{n-1}}$) and $m$ intervals from $\trust(\I)$ (denoted $I_{T_0}, I_{T_1}, \ldots, I_{T_{m-1}}$), ordered by release time. Note that $I_{G_0}$ has an earlier release time than $I_{T_0}$ due to the \greedy\ algorithm's property of always selecting the earliest available interval.

For the earliest interval $I_{G_0}$ in a component, we can easily find the probability of accepting this interval by
\begin{equation}
\nonumber
\mathbb{P}[I_{G_0} \in \alg(\I)] = p_g.
\end{equation}
For any subsequent interval $I_{G_i}$ that conflicts with an earlier interval $I_{T_j}$, we can compute the probability recursively
\begin{equation}
\label{eq:pg-recursion}
\mathbb{P}[I_{G_i} \in \alg(\I)] = \bigl(1 - \mathbb{P}[I_{T_j} \in \alg(\I)]\bigr) \cdot p_g.
\end{equation}
Similarly, for any interval $I_{T_i} \in \trust(\I)$ that conflicts with an earlier interval $I_{G_j}$,
\begin{equation}
\label{eq:pt-recursion}
\mathbb{P}[I_{T_i} \in \alg(\I)] = \bigl(1 - \mathbb{P}[I_{G_j} \in \alg(\I)]\bigr) \cdot p_t.
\end{equation}
Solving the recurrence relations~\eqref{eq:pg-recursion} and~\eqref{eq:pt-recursion}, we obtain:
\begin{equation}
    \nonumber
    \mathbb{P}[I_{T_i} \in \alg(\I)] = (p_t - p_tp_g)\left(\frac{1 - (p_tp_g)^{x_i}}{1 - p_tp_g}\right),
\end{equation}
where $x_i$ is the depth in the recursion sequence, which is at most $n + m$. In another way, $x_i$ is the number of times we should substitute the right-hand side of equation recursively, until we reach to the base case of the recursion with is $\mathbb{P}[I_{G_0} \in \alg(\I)]$.

\begin{observation}
\label{obs:pt-increasing}
The probability $\mathbb{P}[I_{T_i} \in \alg(\I)]$ is non-decreasing with respect to the recursion depth.
\end{observation}

With the Observation~\ref{obs:pt-increasing}, we have $\min_i \mathbb{P}[I_{T_i} \in \alg(\I)] = \mathbb{P}[I_{T_0} \in \alg(\I)]$. Therefore,
% \begin{equation}
% \label{eq:pt-lowerbound}
\begin{equation}
\nonumber
\mathbb{P}[I_{T_i} \in \alg(\I)] \geq \mathbb{P}[I_{T_0} \in \alg(\I)] = p_t \cdot (1 - p_g).
\end{equation}

% \end{equation}

For intervals in $\greedy(\I)$, we have
$\min_i \mathbb{P}[I_{G_i} \in \alg(\I)] = \mathbb{P}[I_{G_{m-1}} \in \alg(\I)]$, based on equation~\eqref{eq:pg-recursion} and Observation~\ref{obs:pt-increasing}. Hence,
\begin{align}
\mathbb{P}[I_{G_i} \in \alg(\I)] &\geq \mathbb{P}[I_{G_{n-1}} \in \alg(\I)] \nonumber \\
&= (1 - \mathbb{P}[I_{T_j} \in \alg(\I)]) \cdot p_g \nonumber \\
&\geq \left(1 -\frac{p_t - p_tp_g}{1 - p_tp_g}\right) \cdot p_g \nonumber \\
&= \frac{p_g - p_tp_g}{1 - p_tp_g}. \nonumber
\end{align}

The expected performance is bounded by contributions from $\greedy(\I)$, which complete the proof of robustness property.
\begin{align}
\mathbb{E}[|\alg(\I)|] &= \sum_{I_i \in \I} \ell_i \cdot \mathbb{P}[I_i \in \alg(\I)] \nonumber \\ 
&\geq \sum_{I_i \in \greedy(\I)} \ell_i \cdot \mathbb{P}[I_i \in \alg(\I)] \nonumber \\
&\geq \sum_{I_i \in \greedy(\I)} \ell_i \cdot \frac{p_g-p_tp_g}{1 - p_tp_g} \nonumber \\
&= |\greedy(\I)| \cdot \frac{p_g-p_tp_g}{1 - p_tp_g}.
\label{eq:robustness-merg}
\end{align}

Similarly, considering contributions from $\trust(\I)$:
\begin{align}
\mathbb{E}[|\alg(\I)|] &= \sum_{I_i \in \I} \ell_i \cdot \mathbb{P}[I_i \in \alg(\I)] \nonumber \\ 
&\geq \sum_{I_i \in \trust(\I)} \ell_i \cdot \mathbb{P}[I_i \in \alg(\I)] \nonumber \\
&\geq \sum_{I_i \in \trust(\I)} \ell_i \cdot p_t \cdot (1 - p_g) \nonumber \\
&= |\trust(\I)| \cdot p_t \cdot (1 - p_g).
\label{eq:smoothness-intermediate}
\end{align}

From Definition~\ref{def:prediction-error}, we have $|\trust(\I)| = |\opt(\I)| \cdot (1 - \eta)$.
\begin{equation*}
\mathbb{E}[|\alg(\I)|] \geq |\opt(\I)| \cdot (1 - \eta) \cdot p_t \cdot (1 - p_g).
\end{equation*}

Therefore, combining equations~\eqref{eq:robustness-merg} and~\eqref{eq:smoothness-intermediate}, we have

\begin{align*}
\mathbb{E}[|\alg(\I)|] \geq 
\max\Big\{& |\opt(\I)| \cdot (1 - \eta) \cdot p_t \cdot (1 - p_g), \\
          & \frac{p_g - p_t p_g}{1 - p_t p_g} \cdot |\greedy(\I)| \Big\},
\end{align*}

This completes the proof.
\end{proof}

Figure~\ref{fig:smoothness-robustness-trade-off} illustrates the trade-off between smoothness and robustness across different values of $p_t$ and $p_g$. Specifically, it visualizes the maximum robustness achievable for different levels of smoothness.

\begin{figure}[t]
    \centering
    \includegraphics[width=0.5\textwidth]{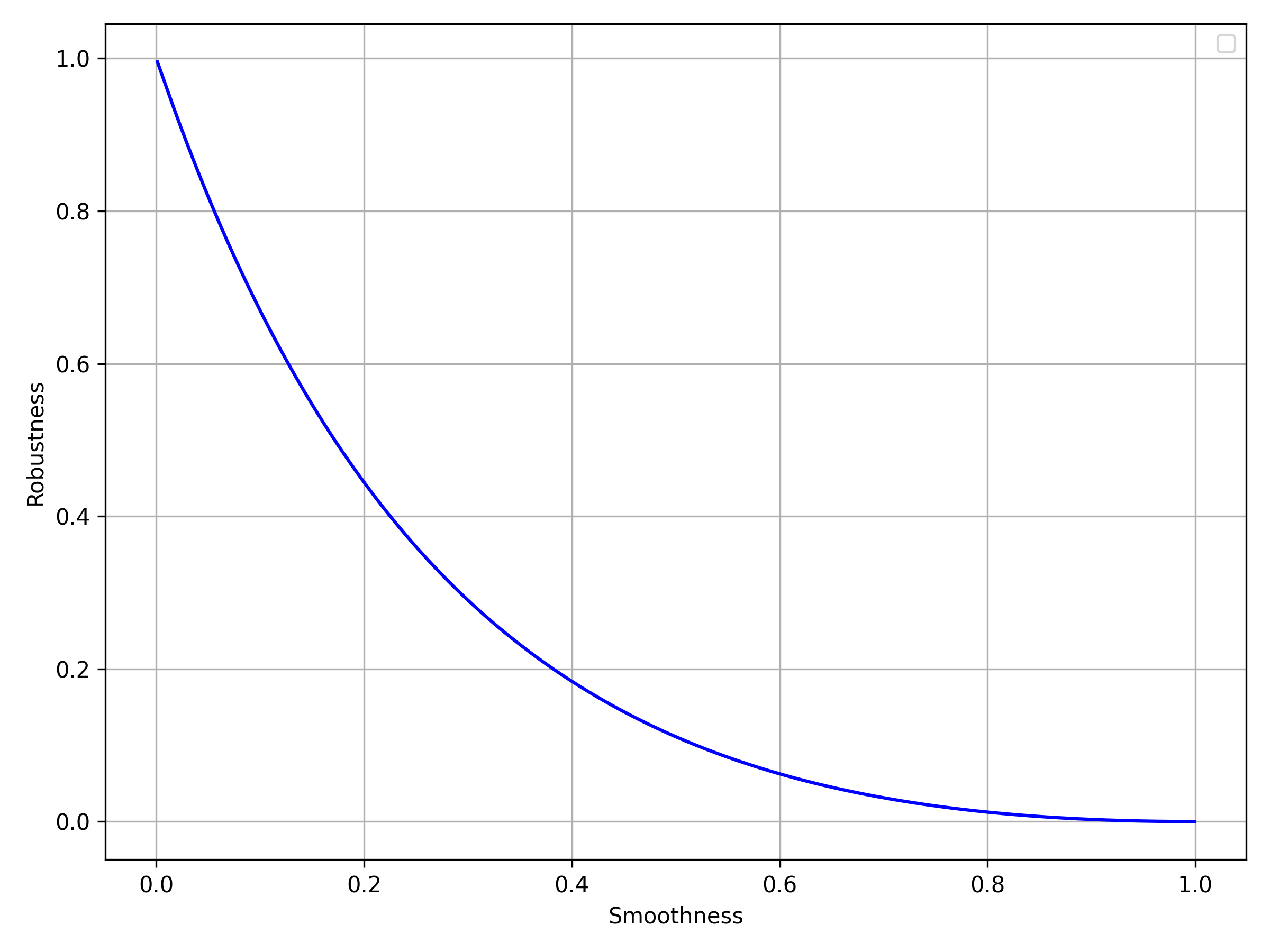}
    \caption{
        Trade-off curve between the best smoothness and robustness coefficients for \SmoothMerge\ as $p_t$ and $p_g$ vary. In the smoothness expression, the factor $|\opt(\I)| \cdot (1 - \eta)$ is omitted from the plot; in the robustness expression, the factor $|\greedy(\I)|$ is omitted.
    }
    \label{fig:smoothness-robustness-trade-off}
\end{figure}

When $p_t = 1$ and $p_g = 0$, the algorithm purely follows predictions, achieving perfect smoothness but no robustness. Conversely, when $p_t = 0$ and $p_g = 1$, the algorithm reduces to the standard \greedy\ approach, providing full robustness but no smoothness benefit. Other balanced choices of $p_t$ and $p_g$ offer both properties simultaneously, though with reduced coefficients. Table~\ref{tab:smoothmergevalues} illustrates the trade-offs between smoothness and robustness for different parameter choices.

\begin{table}[t]
    \centering
    \begin{tabular}{|c|c|c|c|}
        \hline
        $p_t$ & $p_g$ & Smoothness & Robustness \\
        \hline
        \hline
        1 & 0 & $(1 - \eta) \cdot |\opt(\I)|$ & $0$ \\
        \hline
        0 & 1 & $0$ & $|\greedy(\I)|$ \\
        \hline
        $0.50$ & $0.50$ & $0.25 \cdot (1 - \eta) \cdot |\opt(\I)|$ & $0.33 \cdot |\greedy(\I)|$ \\
        \hline
        $0.75$ & $0.33$ & $0.50 \cdot (1 - \eta) \cdot |\opt(\I)|$ & $0.11 \cdot |\greedy(\I)|$ \\
        \hline
        $0.50$ & $0.75$ & $0.12 \cdot (1 - \eta) \cdot |\opt(\I)|$ & $0.60 \cdot |\greedy(\I)|$ \\
        \hline
    \end{tabular}
    \caption{Smoothness and robustness coefficients for different values of $p_t$ and $p_g$.}
    \label{tab:smoothmergevalues}
\end{table}

This approach can be generalized to combine any pair of algorithms beyond \trust\ and \greedy. The key insight is to merge a prediction-dependent algorithm with a robust classical baseline. While it is natural to use a state-of-the-art classical algorithm in this role, our choice of the \greedy\ algorithm as the robust component is motivated by its simplicity and the tractability of its competitive analysis.

\subsubsection{Experimental Analysis}

We present an experimental evaluation of \SmoothMerge\ in comparison with \greedy\, \trust, and \opt. The algorithm smoothly interpolates between the behaviors of \trust\ and \greedy, adapting to prediction reliability through the choice of $p_t$ and $p_g$.

\paragraph{Experimental Setup.}
Our evaluation uses real-world scheduling data for parallel machines~\citep{10.1007/3-540-47954-6_4}. Table~\ref{tab:dataset} outlines the interval scheduling inputs we generated from the benchmarks of \cite{10.1007/3-540-47954-6_4}. For each benchmark with $n$ intervals, we consider $1000$ equally spaced values of $d \in [0, n]$. For each value of $d$, we randomly selected $\frac{n}{d}$ intervals and moved them all to the end of the timeline, making them conflict with each other. Then, we computed the offline optimal solution for this modified set of intervals and used this solution as the binary prediction $\predO$. To evaluate the \SmoothMerge\ algorithm, we calculated the average performance over $50$ runs on each prediction, using our personal laptop equipped with an Apple M3 GPU and $16$GB of memory.

Figure~\ref{fig:smoothmerge-performance} shows the result of \SmoothMerge\ algorithm for different values of $p_t$ and $p_g$. The results are aligned with our theoritical findings, \trust\ becomes worse than \SmoothMerge as the error value increases, while \SmoothMerge degrades gently as a function of the \greedy. We note that \greedy\ performs better when there is less overlap between the input intervals. In an extreme case, when no two intervals overlap, \greedy\ is trivially optimal.

\begin{table}[t]
    \centering
    \begin{tabular}{|c|c|c|c|}
        \hline
        Name & Input Size & $k$ & $\Delta$   
        \\
        \hline
        \hline
        NASA-iPSC-1993-3.1 & 18,065 & 62,643 & 62,643  \\
        \hline
        CTC-SP2-1996-3.1 & 77,205 & 71,998 & 71,998 \\
        \hline
    \end{tabular}
    \caption{Details on the benchmarks used in our experiment.}
    \label{tab:dataset}
\end{table}

\begin{figure}[t]
    \centering
    \begin{subfigure}[b]{0.48\textwidth}
        \centering
        \includegraphics[width=\textwidth]{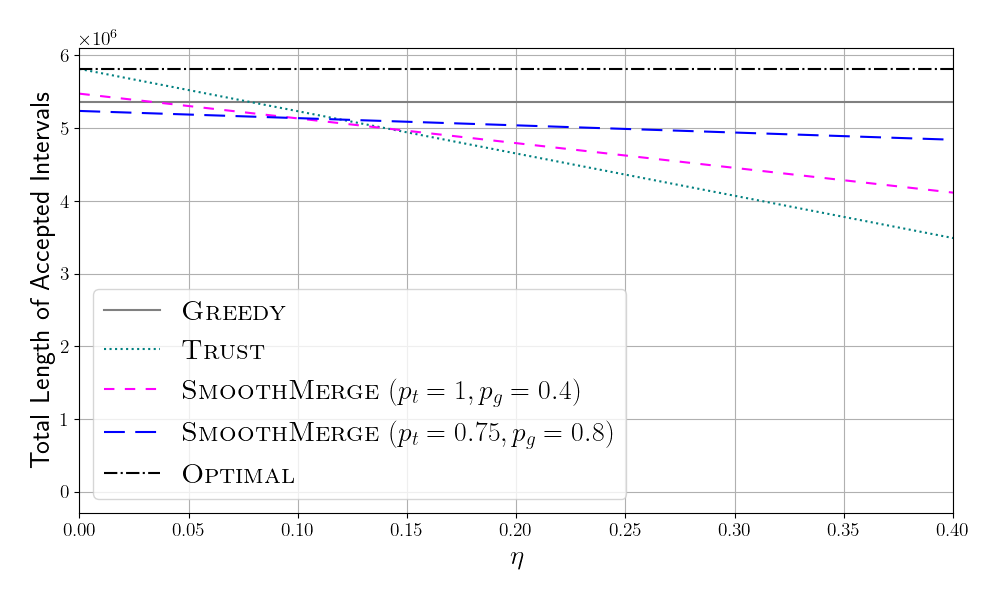}
        \caption{NASA-iPSC-1993-3.1 dataset}
        \label{fig:smoothmerge-performance-a}
    \end{subfigure}
    \hfill
    \begin{subfigure}[b]{0.48\textwidth}
        \centering
        \includegraphics[width=\textwidth]{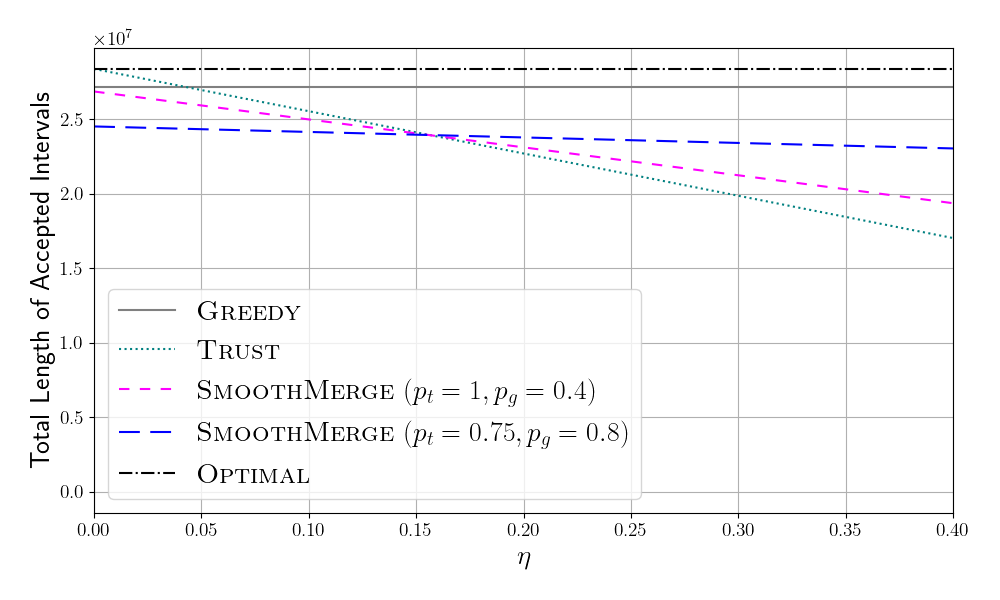}
        \caption{CTC-SP2-1996-3.1 dataset}
        \label{fig:smoothmerge-performance-b}
    \end{subfigure}
    \caption{
        Performance analysis of \SmoothMerge\ algorithm for two choices of parameters $p_t$ and $p_g$, compared to \opt, \trust, and \greedy.
    }
    \label{fig:smoothmerge-performance}
\end{figure}

\section{Conclusion}
We presented a systematic study of online interval scheduling with predictions in the irrevocable setting, focusing on maximizing the total length of accepted intervals. Our main contributions include the \textsc{SemiTrust-and-Switch} framework, which captures trade-offs between consistency and robustness. We showed that these frameworks are optimal for two-value instances in certain settings and can incorporate both deterministic and randomized algorithms. In addition, we introduced the \textsc{SmoothMerge} algorithm, which achieves smoothness by blending predictive and non-predictive strategies in a randomized manner.

Beyond the specific results presented here, our frameworks offer a modular approach that may be adaptable to other variants of online interval scheduling, such as revocable models, weighted objectives, or settings with resource constraints. While maximization problems present unique challenges for switching strategies, our results suggest that such approaches may be broadly useful in designing learning-augmented algorithms across a wider range of settings.

\pagebreak
\bibliographystyle{plainnat}
\bibliography{References}

\end{document}